\documentclass{article} 
\usepackage{iclr2026_conference,times}

\usepackage[utf8]{inputenc} 
\usepackage[T1]{fontenc}    
\usepackage{hyperref}       
\usepackage{url}            
\usepackage{booktabs}       
\usepackage{amsfonts}       
\usepackage{nicefrac}       
\usepackage{microtype}      
\usepackage{xcolor}         
\usepackage{colortbl}
\usepackage{mathtools}

\usepackage{graphicx}
\usepackage{xcolor}
\usepackage{xspace}
\usepackage{amsmath}
\usepackage{amssymb}
\usepackage{bbm} 
\usepackage{subcaption}
\usepackage{algorithm}
\usepackage{algpseudocode}
\usepackage{mdwlist}
\usepackage{multirow} 
\usepackage{cuted}
\usepackage{afterpage}
\usepackage{stfloats}
\usepackage{hhline}
\usepackage{tabularx}
\usepackage{kotex}
\usepackage{wrapfig}
\usepackage{cellspace}
\usepackage{amsthm}
\usepackage{enumitem}

\usepackage{makecell}
\usepackage{tcolorbox}

\usepackage{amsmath,amsfonts,bm}

\def\eqref#1{equation~\ref{#1}}

\def\1{\bm{1}}

\DeclareMathAlphabet{\mathsfit}{\encodingdefault}{\sfdefault}{m}{sl}
\SetMathAlphabet{\mathsfit}{bold}{\encodingdefault}{\sfdefault}{bx}{n}

\DeclareMathOperator*{\argmax}{arg\,max}
\DeclareMathOperator*{\argmin}{arg\,min}

\newcommand{\hide}[1]{}

\definecolor{ForestGreen}{rgb}{0.133, 0.545, 0.133}
\definecolor{torange}{rgb}{0.949, 0.522, 0.}
\definecolor{lightlavender}{RGB}{240, 240, 255}
\definecolor{lightgrey}{rgb}{0.98, 0.98, 0.98}

\usepackage{pifont}

\newcommand{\githublink}{\url{https://github.com/snudatalab/PQQP}\xspace}

\newtheorem{problem}{Problem}
\newtheorem{hypothesis}{Hypothesis}
\newtheorem{definition}{Definition}

\newtheorem{theorem}{Theorem}

\newtheorem{assumption}{Assumption}

\newcolumntype{Y}{>{\centering\arraybackslash}X}
\newcommand{\smallsection}[1]{\noindent\smash{\textbf{#1.}}}
\setlist[itemize]{topsep=1pt, itemsep=0pt}

\newcommand{\mat}[1]{\mathbf{#1}}

\newcommand{\pg}[1]{\mathcal{G}(#1)}
\newcommand{\coa}[1]{\mathcal{A}\big(#1\big)}

\newcommand{\gaussian}[1]{\mathcal{N}(#1)}

\newcommand{\model}{\phi}
\newcommand{\compmodel}{\phi'}
\newcommand{\comprate}{C}
\newcommand{\comprateprune}{C_{\mathcal{P}}}
\newcommand{\compratequant}{C_{\mathcal{Q}}}
\newcommand{\compeqrate}{C^{*}}
\newcommand{\compeqrateprune}{C^{*}_{\mathcal{P}}}

\newcommand{\pruneratio}{p}
\newcommand{\orgbits}{B_{orig}}
\newcommand{\quantbits}{B_{\mathcal{Q}}}
\newcommand{\compeqratepruneone}{C^{*}_{\mathcal{P}_1}}

\newcommand{\compratepruneone}{C_{\mathcal{P}_1}}
\newcommand{\compratequantone}{C_{\mathcal{Q}_1}}

\newcommand{\comprateprunetwo}{C_{\mathcal{P}_2}}
\newcommand{\compratequanttwo}{C_{\mathcal{Q}_2}}

\newcommand{\metriconly}{\mathcal{M}}
\newcommand{\metric}[1]{\mathcal{M}(#1)}

\newcommand{\layeri}{l_i}
\newcommand{\layerj}{l_j}
\newcommand{\layers}{\mathbb{L}}

\newcommand{\weighti}{\mat{W}_i}
\newcommand{\acti}{\mat{X}_i}
\newcommand{\weightj}{\mat{W}_j}
\newcommand{\actj}{\mat{X}_j}

\newcommand{\ptq}{\mathcal{Q} \circ \mathcal{P}}
\newcommand{\qtp}{\mathcal{P} \circ \mathcal{Q}}

\newcommand{\unit}{u}
\newcommand{\uniti}{u_{i}}
\newcommand{\units}[1]{\mathbb{U}(#1)}
\newcommand{\applies}[3]{\mathbb{D}_{#1}^{#2}(#3)}

\newcommand{\error}[1]{\delta(#1)}
\newcommand{\erroru}[2]{\delta_{#1}(#2)}
\newcommand{\errorp}[1]{\delta_{\mathcal{P}}(#1)}
\newcommand{\errorq}[1]{\delta_{\mathcal{Q}}(#1)}
\newcommand{\errorptq}[1]{\delta_{\ptq}(#1)}
\newcommand{\errorqtp}[1]{\delta_{\qtp}(#1)}

\newcommand{\compset}{\mathbb{F}}
\newcommand{\compone}[1]{f_{1}(#1)}
\newcommand{\componeonly}{f_{1}}
\newcommand{\comptwo}[1]{f_{2}(#1)}
\newcommand{\comptwoonly}{f_{2}}
\newcommand{\compn}[1]{f_{n}(#1)}

\newcommand{\comp}[1]{f(#1)}
\newcommand{\componly}{f}

\newcommand{\permopt}{\pi^{*}}
\newcommand{\perm}{\pi}
\newcommand{\permset}{\Pi}

\newcommand{\typeset}[1]{\mathcal{T}_{#1}}
\newcommand{\type}{t}
\newcommand{\typegran}[1]{t_{#1}}
\newcommand{\lut}{\type_{\text{lut}}}

\newcommand{\quant}[1]{\mathcal{Q}(#1)}
\newcommand{\prune}[1]{\mathcal{P}(#1)}
\newcommand{\quantonly}{\mathcal{Q}}
\newcommand{\pruneonly}{\mathcal{P}}
\newcommand{\prunedlayers}{\mathbb{P}}
\newcommand{\prunei}[1]{\mathcal{P}_{i}(#1)}
\newcommand{\allprunes}[1]{\mathbb{P}(#1)}

\newcommand{\groupone}{\mathbb{G}_1}
\newcommand{\grouptwo}{\mathbb{G}_2}
\newcommand{\groupthree}{\mathbb{G}_3}
\newcommand{\groupfour}{\mathbb{G}_4}

\newcommand{\interference}[1]{\Delta(#1)}
\newcommand{\interferenceonly}{\Delta}

\usepackage{hyperref}
\usepackage{url}

\graphicspath{ {images/} }

\title{Prune-then-Quantize or Quantize-then-Prune? Understanding the Impact of Compression \\ Order in Joint Model Compression}

\author{Minjun Kim, Jaehyeon Choi, Hyunwoo Yang, Jongjin Kim, Jinho Song \& U Kang\thanks{Corresponding Author.} \\
Seoul National University, Seoul, South Korea \\
\texttt{\{minjun.kim,ukang\}@snu.ac.kr} \\
}

 \iclrfinalcopy 

\begin{document}

\maketitle

\begin{abstract}
    What happens when multiple compression methods are combined—does the order in which they are applied matter?
Joint model compression has emerged as a powerful strategy to achieve higher efficiency by combining multiple methods such as pruning and quantization.
A central but underexplored factor in joint model compression is the compression order, or the sequence of different methods within the compression pipeline.
Most prior studies have sidestepped the issue by assuming orthogonality between techniques, while a few have examined them only in highly constrained cases.
Consequently, the broader role of compression order in shaping model performance remains poorly understood.
In this paper, we address the overlooked problem of compression order and provide both theoretical and empirical analysis.
We formulate the problem of optimizing the compression order and introduce the Progressive Intensity Hypothesis, which states that weaker perturbations should precede stronger ones.
We provide theoretical guarantees showing that the relative benefit of one order increases with the underlying performance gap.
Extensive experiments on both language and vision models validate the hypothesis, and further show its generality to broader setups such as multi-stage compression and mixed-precision quantization. 
\end{abstract}

\section{Introduction}
\label{sec:intro}

\textit{When combining pruning and quantization, which order leads to better model performance?}
Although deep neural networks have achieved remarkable success across diverse domains, deploying them on edge devices remains challenging due to limited computational resources.
To bridge this gap, network compression techniques~\citep{SongHanSurvey, PQSurvey, MCforLLMSurvey, ZSQSurvey} have been proposed, including pruning~\citep{KPrune, SLEB, SPRINT}, quantization~\citep{SensiMix, QuaRot, SynQ}, knowledge distillation~\citep{MustaD, Pea-KD, PET}, parameter sharing~\citep{Roast, BasisSharing} and low-rank approximation~\citep{Falcon, SVDQuant, SVD-LLM}.
Recent studies highlight that combining these compression methods—known as \textit{joint model compression}—achieves better trade-offs between compression ratio and model performance than applying them separately~\citep{PsandQs, DeepComp, AdaptiveQandP}.


A critical yet underexplored issue in joint model compression is the \textit{compression order}—the sequence in which individual compression methods are applied to the target model.
As most of these techniques are not simultaneously applicable and should be executed sequentially~\citep{APQ, PruneVSQuant}, identifying an optimal order can yield a ``free lunch'' by improving performance without any additional computation.
Empirical findings~\citep{AccelFPGAs, OPQ, Automatic} show that the performance of the compressed model is sensitive to the compression order, necessitating a deeper understanding of when and why certain orders work better.

However, the role of compression order has been largely overlooked by prior studies~\citep{oBERT, SmoothQuant, Dejavu}.
Most existing studies implicitly assume that compression order has no effect on the grounds of orthogonality, na\"ively arguing that different techniques operate independently~\citep{PQK, InferenceSurvey, SLEB, JointJournal}.
Only a few works have examined the problem, and most of them merely offer empirical evidence confined to specific settings~\citep{APQ, UnderstandInt4, BoostViT}.
A notable attempt~\citep{Interplay} presents a theoretical framework, proving the non-orthogonality of pruning and quantization, concluding that pruning followed by quantization is always preferable.
However, the scope of the work remains narrow and less practical, focusing only on magnitude-based pruning and max-scaled quantization (see Appendix~\ref{app:subsec:direct_comparison_interplay}).
To date, no study has systematically investigated the tendencies of compression order in general settings, neither empirically nor theoretically.

In this paper, we demonstrate that applying more aggressive compression algorithms at later stages yields superior performance.
We first formulate the problem of \textit{joint compression order optimization} (see Section~\ref{subsec:problem_definition} and Problem~\ref{problem}), and introduce \textit{the Progressive Intensity Hypothesis}, which posits that ordering compression methods from weaker to stronger improves performance (see Hypothesis~\ref{hypothesis}).
Figure~\ref{fig:hypothesis} offers a conceptual depiction of the proposed hypothesis.
We validate our claim through both theoretical analysis and extensive experiments.
Theoretically, we show that the advantage of the compression order grows monotonically with the performance gap between two methods under \textit{disjoint sensitivity} (see Theorem~\ref{thm:1} and Definition~\ref{def:disjoint_selectivity}).
In other cases, we define \textit{interference} as an additional error from mutual interaction and investigate its influence (see Definition~\ref{def:interference}).
Experimentally, we validate the hypothesis across both language and vision models, covering diverse model architectures, tasks, and compression scenarios (see Sections~\ref{subsec:experiment_llms} and~\ref{subsec:experiment_vision}).
Our analysis also considers how factors such as weight-update strategies and rotations affect the role of compression order (see Figures~\ref{fig:llm_result_update_and_rotation} and~\ref{fig:rotation_pruning}).
Moreover, our results highlight that the hypothesis generalizes to broader paradigms, including multi-stage approaches and mixed-precision quantization (see Section~\ref{subsec:beyond}).


Our contributions are summarized as follows:
\begin{itemize}[leftmargin=3.4mm, itemsep=-1mm, topsep=0mm]
    \item \textbf{Formulation.}
    We formally define the novel problem of optimizing the compression order in joint model compression (see Problem~\ref{problem}),
    and propose \textit{the Progressive Intensity Hypothesis}, suggesting that stronger perturbations should be applied later to achieve better performance (see Hypothesis~\ref{hypothesis}).
    \item \textbf{Theory.}
    We provide a theoretical analysis that quantifies the relationship between method interaction and order sensitivity.
	Specifically, we prove that the superiority of one ordering grows monotonically with the performance gap between the two methods (see Theorem~\ref{thm:1}).
    \item \textbf{Experiments.}
    Extensive and consistent experimental results across various domains, models, and tasks support our hypothesis (see Figures~\ref{fig:llm_result},~\ref{fig:llm_result_update_and_rotation}, and~\ref{fig:vision}).
    We further extend the problem to broader setups such as multi-stage compression and mixed-precision quantization (see Figures~\ref{fig:multi_stage} and~\ref{fig:mpq}).
\end{itemize}
To the best of our knowledge, we are the first to both theoretically and experimentally analyze the impact of compression order in joint model compression under general and practical settings.

\smallsection{Reproducibility}
All of our implementation and datasets are available at \githublink.
%

\begin{figure}[t]
	\begin{tcolorbox}[colframe=black!100, colback=cyan!10, boxrule=1pt, arc=5pt, left=5pt, right=5pt, top=5pt, bottom=5pt]
		\textbf{The Progressive Intensity Hypothesis.}
		Neural networks compressed by multiple methods perform better when weaker perturbations are applied first and stronger ones later.
	\end{tcolorbox}
	\vspace{3mm}
	\centering
	\includegraphics[width=\linewidth]{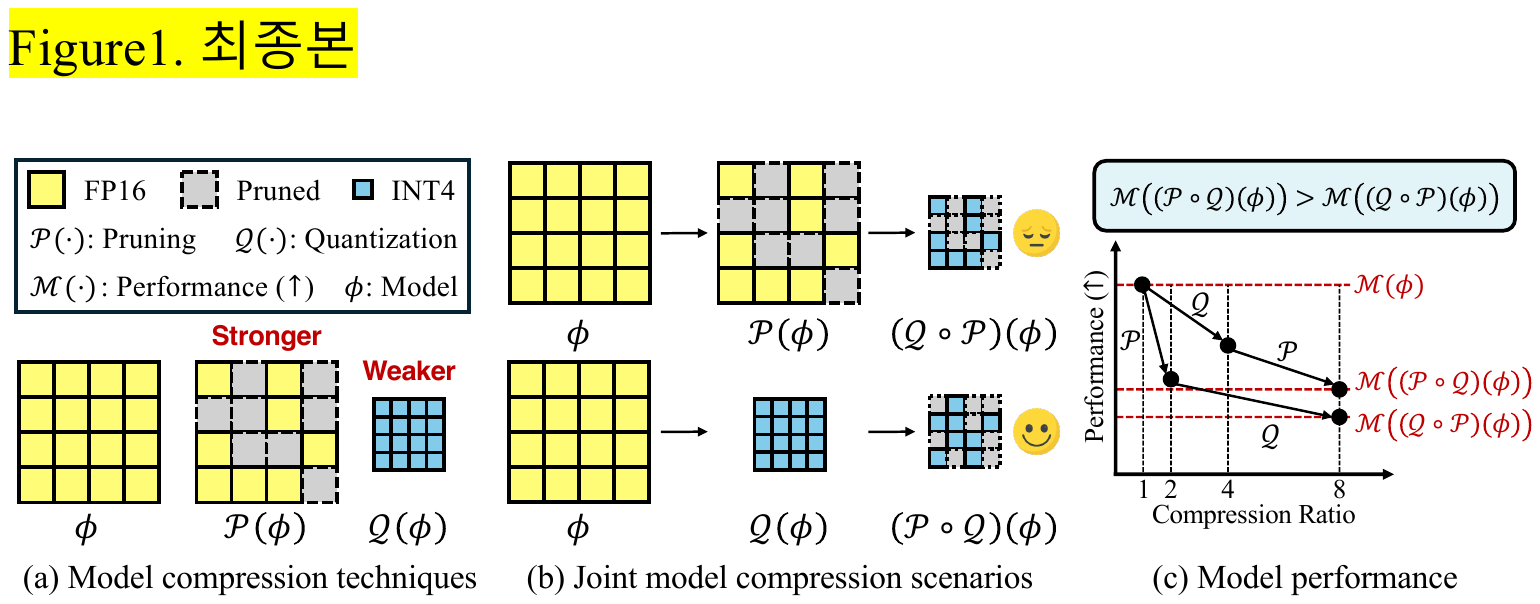}
	\caption{
		\textbf{The Progressive Intensity Hypothesis:}
		Given two compression techniques,
		we conjecture that compressed models perform better if the stronger method is applied after the weaker one.
		That said, the optimal order between pruning and quantization varies with their compression ratios.
	}
	\label{fig:hypothesis}
	\vspace{-4mm}
\end{figure}

\section{Preliminaries and Related Works}
\label{sec:prelim}
We briefly describe the preliminaries and related works
on pruning, quantization, and joint model compression.
The notations used throughout this paper are formally defined in Appendix~\ref{app:sec:notation}.

\smallsection{Pruning and Quantization}
Compression\footnote{In the remainder of the paper, we use `compression' to refer to `model compression' for simplicity.}  techniques aim to transform a pre-trained model $\model$ as a more efficient version $\compmodel$ while minimizing performance degradation~\citep{McAuleySurvey, MLCompSurvey, PPFSurvey}.
This process inevitably introduces an error term $\error{\cdot}$, representing the deviation between outputs of $\compmodel$ and $\model$, which typically increases with the compression ratio $\comprate$.
We define the compression ratio $\comprate$ as the memory usage of $\model$ divided by that of $\compmodel$.
Among various compression techniques $\comp{\model; \comprate}$, our work centers on two major forms: pruning and quantization.

Pruning $\prune{\cdot}$ directly discards less important components of a model to achieve the desired compression ratio while retaining its most critical parts~\citep{KCM, SliceGPT, KPrune}.
Based on the level of granularity, pruning methods fall into three categories:
structured pruning~\citep{SLEB} removes entire structural elements such as layers, filters, or attention heads, semi-structured pruning~\citep{LPViT} enforces fixed sparsity patterns (e.g., 2:4 sparsity) across tensors, and unstructured pruning~\citep{SparseGPT} prunes weights in a fully flexible manner.
In the case of structured pruning at the layer level, the induced error $\errorp{\weighti, \acti}$ is $-\weighti \acti$ when pruning is applied to layer $\layeri$ with weight $\weighti$ and activation $\acti$, and $\mat{0}$ otherwise.
The model achieves a compression ratio $\comprateprune = 1/(1-\pruneratio)$ by pruning a fraction $\pruneratio$ of weights.

Quantization $\quant{\cdot}$ reduces the bit precision used to represent weights and activations by encoding a high-bit network into a lower-bit format~\citep{GholamiSurvey}.
Common quantization techniques include uniform~\citep{BRECQ}, non-uniform~\citep{GanQ}, binary coding~\citep{UniQuanF}, and vector quantization (VQ)~\citep{QuipSharp}.
Although some techniques such as VQ focus only on weight quantization without compressing activations, our main scope is on compressing both for practical acceleration.
A main challenge towards robust quantization is the activation outliers~\citep{SmoothQuant, OWQ}, but recent rotation-based methods~\citep{DuQuant, SpinQuant} have largely overcome it.
The layer-wise error by quantization $\quant{\cdot}$ for a layer $\layeri$ with weight $\weighti$ and activation $\acti$ is computed as $\errorq{\weighti, \acti} = \quant{\weighti} \quant{\acti} - \weighti \acti$, with a compression ratio $\compratequant = \orgbits/\quantbits$ depending on the original $\orgbits$ and target $\quantbits$ bit-widths.

\smallsection{Joint Model Compression}
Joint compression combines two or more compression methods, achieving higher compression ratios while minimizing performance loss~\citep{APQ, UnderstandInt4, BoostViT, Interplay}.
These methods fall into two categories: co-designed and post-hoc frameworks.
Although the former offers the benefit of integration-aware design,
they tend to be method-specific and less adaptable to alternative configurations~\citep{Automatic}.

In contrast, combining independently designed techniques allows for method-agnostic pipelines that adapt easily to diverse architectures.
Several pruning works~\citep{oBERT, SmoothQuant, SLEB} empirically confirm that such combinations with quantization are both feasible and beneficial.
As independently designed techniques are applied one after another, the order of compression plays a key role.
However, the impact of compression order has not been adequately examined in the current literature.
We denote applying $\compone{\cdot}$ before $\comptwo{\cdot}$ as $\componeonly \rightarrow \comptwoonly$ or $(\comptwoonly \circ \componeonly)(\cdot)$. 

\section{Joint Compression Order Optimization}
\label{sec:problem}

\subsection{Problem Definition}
\label{subsec:problem_definition}
We are given a pre-trained model and multiple compression techniques, each associated with a specific compression rate.
The goal is to find the optimal order in which to sequentially apply these methods.
An order is considered optimal if it minimizes the degradation in model performance.
We quantify performance using a metric $\metric{\cdot}$, where higher values indicate better outcomes (e.g., classification accuracy or the negative of perplexity).
We provide the formal definition as Problem~\ref{problem}.
%
%
\begin{problem}[Joint Compression Order Optimization]
\label{problem}
We have a pre-trained model $\model$, a set of compression methods $\compset = \{\compone{\cdot}, \comptwo{\cdot}, \cdots, \compn{\cdot}\}$, and a performance metric $\metric{\cdot}$.
For a set $\permset = \{ \perm: \compset \rightarrow \compset ~|~ \perm \text{ is bijective}\}$ of all permutations over $\compset$, the goal is to find the optimal permutation $\permopt \in \permset$ that maximizes the performance of the compressed model: $\permopt = \argmax_{\perm \in \permset}  \metric{\perm(\model)}$.
\end{problem}
%
%
\subsection{Characterizing Compression Attributes}
\label{subsec:terms}
Two key attributes arise when characterizing compression in a general setting: granularity and intensity.
Granularity refers to the smallest structural unit
on which compression is applied, and intensity refers to how aggressively the method alters the model, measured by its impact on performance.

\smallsection{Granularity of Compression}
Compression methods are not applied to the model as a whole, but rather operate locally on its individual components.
We define compression granularity as the atomic level at which compression is performed.
To formalize this notion, we begin by abstracting the model into a set of component types, such as layers, sublayers, or attention heads.
We refer to these as \textit{abstract types}, which define the structural units over which compression may act.
For two abstract types $\type_{1}$ and $\type_{2}$, we say $\type_{1}$ is \textit{larger} than $\type_{2}$ if $\type_{1}$ strictly contains $\type_{2}$ as a structural unit.
Among all types that are larger than both $\type_{1}$ and $\type_{2}$, we define the \textit{least upper type} $\lut(\type_{1}, \type_{2})$ as the smallest one.
For a given model $\model$, let $\typeset{\model}$ denote the set of abstract types; this set depends on the model architecture.

Each compression method $\comp{\cdot}$ may be applicable only to a subset of abstract types.
We denote this subset by $\typeset{\componly} \subseteq \typeset{\model}$, representing the structural levels at which $\comp{\cdot}$ can operate.
For instance, layer-wise pruning in large language models is applicable only to units coarser than layers.
Then, the granularity of $\comp{\cdot}$ is the smallest unit $\typegran{\componly} \in \typeset{\componly}$ on which $\comp{\cdot}$ is applicable, as defined in Definition~\ref{def:granularity}.
%
%
\begin{definition}[Compression Granularity]
\label{def:granularity}
	For a model $\model$ with a set $\typeset{\model}$ of abstract types and compression method $\comp{\cdot}$,
	the compression granularity $\typegran{\componly} \coloneqq \argmin_{\type \in \typeset{\componly}}|\type|$, where
	$\typeset{\componly} \subseteq \typeset{\model}$ denotes the set of abstract types on which $\comp{\cdot}$ operates,
	and $|\type|$ denotes the structural size of type $\type$.
\end{definition}
%
%
\smallsection{Intensity of Compression}
Compression methods affect the model differently even at identical compression ratios, so comparing their intensities directly is challenging.
To assess compression strength, we introduce three concepts grounded in performance degradation: performance gap $\pg{\componeonly, \comptwoonly}$, compression equivalent ratio $\compeqrate_{\componly}$, and compression order advantage $\coa{\componeonly \rightarrow \comptwoonly}$.

Performance differences between two methods $\compone{\cdot; \comprate_1}$ and $\comptwo{\cdot; \comprate_2}$, each applied at its respective compression ratios $\comprate_1$ and $\comprate_2$, provide a direct measure of their relative intensity.
We call this the \textit{performance gap} $\pg{\model, \metriconly; \compone{\cdot; \comprate_1}, \comptwo{\cdot; \comprate_2}}$, or simply $\pg{\componeonly, \comptwoonly}$, as defined in Definition~\ref{def:pg}.
If $\pg{\componeonly, \comptwoonly} > 0$, we refer  to $\comptwo{\cdot; \comprate_2}$ as the stronger compression and $\compone{\cdot; \comprate_1}$ as the weaker one.
%
%
\begin{definition}[Performance Gap]
	\label{def:pg}
	Given a model $\model$, a performance metric $\metric{\cdot}$, and two compression methods $\compone{\cdot; \comprate_1}$ and $\comptwo{\cdot; \comprate_2}$,
	the performance gap between two methods
	$\pg{\model, \metriconly; \compone{\cdot; \comprate_1}, \comptwo{\cdot; \comprate_2}}
	\coloneqq \metric{\compone{\phi; \comprate_1}} - \metric{\comptwo{\phi; \comprate_2}}$.
\end{definition}
%
%
Although $\pg{\cdot}$ offers a clear pairwise comparison, its values in metric units are difficult to interpret and may grow rapidly as the compression ratio increases.
Alternatively, mapping methods onto a common scale allows for direct comparison at the level of compression ratios.
While multiple choices exist for the baseline method, we select quantization as it exhibits the best performance across diverse models, thereby offering the widest range.
Accordingly we define the Compression Equivalent Ratio (CER) $\compeqrate(\compone{\cdot}, \quantonly, \comprate)$, or simply $\compeqrate_{\componeonly}$, which expresses the effect of method $\compone{\cdot; \comprate}$ at ratio $\comprate$ as an equivalent ratio of quantization $\quant{\cdot}$, as Definition~\ref{def:cer}.
In other words, starting from a 16-bit model, a compression method $\comp{\model; \comprate}$ with $\compeqrate_{\componly} = 2$ achieves the same performance as 8-bit quantization.
Note that CER of quantization $\quantonly(\cdot)$ is naturally equal to its own compression ratio (i.e., $\compeqrate_{\quantonly} = \compratequant$).
We adopt a straightforward approach by computing CER through linear interpolation.
For instance, $\comp{\cdot}$ achieving $\metric{\componly; \comprate}=$ 65\% accuracy maps to $\compeqrate_{\componly} = 3$ when quantization $\quant{\cdot}$ yields $\metric{\quantonly; \compratequant = 2}=$ 70\% and $\metric{\quantonly; \compratequant = 4}=$ 60\% accuracy, respectively.
%
\begin{definition}[Compression Equivalent Ratio]
	\label{def:cer}
	Given a model $\model$, a performance metric $\metric{\cdot}$, a compression method $\comp{\cdot}$, a quantization method $\quant{\cdot}$, and a compression ratio $\comprate$,
	the compression equivalent ratio $\compeqrate(\comp{\cdot}, \quantonly, \comprate) \coloneqq C'$ such that
	$\metric{\quant{\model; C'}} = \metric{\comp{\model; \comprate}}.$
	%
\end{definition}
%
%

Until now our discussion is limited to single methods; but when multiple methods are applied, how should intensity be defined?
Our scope centers on measuring how intensity changes by compression order.
Accordingly, we capture the gain from applying  $\compone{\cdot}$ before $\comptwo{\cdot}$ over the reverse as compression order advantage $\coa{\model, \metriconly; \compone{\cdot} \rightarrow \comptwo{\cdot}}$, or simply $\coa{\componeonly \rightarrow \comptwoonly}$, as Definition~\ref{def:coa}.
%
%
%
\begin{definition}[Compression Order Advantage]
	\label{def:coa}
	Given a model $\model$, a performance metric $\metric{\cdot}$, and two compression methods $\compone{\cdot; \comprate_1}$ and $\comptwo{\cdot; \comprate_2}$,
	the compression order advantage
	$\coa{\model, \metriconly; \compone{\cdot; \comprate_1} \rightarrow \comptwo{\cdot; \comprate_2}}
	\coloneqq \pg{\componeonly \rightarrow \comptwoonly, \comptwoonly \rightarrow \componeonly}
	= \metric{(\comptwoonly \circ \componeonly)(\phi)} - \metric{(\componeonly \circ \comptwoonly)(\phi)}.$
\end{definition}
%

\subsection{The Progressive Intensity Hypothesis}
\label{subsec:hypothesis}
Our goal is to uncover general patterns in how compression order affects the model performance in joint compression scenarios.
While prior works have focused primarily on isolated settings, we seek to establish a broadly applicable principle.
To this end, we propose \textit{the Progressive Intensity Hypothesis}, which posits that applying stronger compression methods at later stages generally yields better performance.
We formalize this hypothesis for a pair of methods in Hypothesis~\ref{hypothesis}, which serves as the main focus of our analysis;
its extension to multiple methods is presented in Appendix~\ref{app:subsec:theory_generalize}.
%
%
\begin{hypothesis}[The Progressive Intensity Hypothesis]
\label{hypothesis}
	Let $\compone{\cdot; \comprate_1}$ and $\comptwo{\cdot; \comprate_2}$ be two compression methods applied to a model $\model$.
	Then, the compression order advantage
	$\coa{\componeonly \rightarrow \comptwoonly}$
	grows monotonically with the performance gap $\pg{\componeonly, \comptwoonly}$,
	or equivalently with the CER difference
	$\compeqrate_{\comptwoonly} - \compeqrate_{\componeonly}.$
%
\end{hypothesis}
\vspace{-2mm}
%
As an example, if methods $\compone{\cdot}$ and $\comptwo{\cdot}$ yields $\metric{\componeonly; \comprate_1}=$ 75\% and $\metric{\comptwoonly; \comprate_2}=$ 70\% accuracy, respectively (i.e., $\pg{\componeonly, \comptwoonly} = 5\%$p), the compression order advantage $\coa{\componeonly \rightarrow \comptwoonly}$ is mild;
replacing $\comprate_2$ into $\comprate'_2$ at $\metric{\comptwoonly; \comprate'_2}=$ 60\% accuracy (i.e., $\pg{\componeonly, \comptwoonly} = 15\%$p) results in a larger advantage.
\vspace{-2mm}

\section{Theoretical Analysis}
\label{sec:theory}
We theoretically analyze how compression order affects the model performance.
We introduce disjoint selectivity to isolate order-dependent units, and prove in Theorem~\ref{thm:1} that only these units determine the performance gap.
We then show in Theorem~\ref{thm:2} that Hypothesis~\ref{hypothesis} holds due to the reduction of order-dependent units.
We later extend to non-disjoint cases in which interference occurs.
Consistent with earlier works~\citep{Interplay}, we investigate each unit, relying on Assumption~\ref{assumption:general}.

\begin{assumption}
	\label{assumption:general}
	Given a model $\model$ with a set $\layers$ of layers, performance metric $\metric{\cdot}$, a compression method $\comp{\cdot}$, and the layer-wise reconstruction loss $\erroru{\componly}{\layeri}$,
	assume the followings:
	%
	\begin{itemize}[leftmargin=3.4mm, itemsep=-1mm, topsep=0mm]
		\item \textbf{Layer-wise independence.}
	  	The reconstruction error at one layer does not affect the reconstruction error at another:
		$\forall \layeri, \layerj \in \layers, ~ i \neq j: ~ {\partial \, \erroru{\componly}{\layeri}} / {\partial \, \erroru{\componly}{\layerj}} = 0.$
		\item \textbf{Error-performance trade-off.}
  		Model performance is inversely related to total reconstruction error:
		$\exists \beta > 0, ~
		\metric{\model} - \metric{\comp{\model}} = \beta \cdot \sum_{\layeri \in \layers} {\| \erroru{\componly}{\layeri} \| _F^2}. $
	\end{itemize}
\end{assumption}
%

%
\smallsection{Disjoint Selectivity}
Sequential application of two compression methods leads to two distinct scenarios: either there exist units altered by both methods, or all units are exclusively assigned to one.
We define the latter scenario as the case where \textit{disjoint selectivity} holds, as in Definition~\ref{def:disjoint_selectivity}.
This means that while the assignment may vary with order, each unit is ultimately handled by only one method.

%
\begin{definition}[Disjoint Selectivity]
	\label{def:disjoint_selectivity}
	Given a model $\model$, two compression methods $\compone{\cdot}$ and $\comptwo{\cdot}$ with respective granularities $\typegran{\componeonly}$ and $\typegran{\comptwoonly}$,
	disjoint selectivity holds if $~\forall \uniti \in \units{\model; \lut(\typegran{\componeonly}, \typegran{\comptwoonly})},
	~ \forall \pi \in \{\componeonly \circ \comptwoonly, \comptwoonly \circ \componeonly\},
	~ \applies{\uniti}{\componeonly}{\pi} + \applies{\uniti}{\comptwoonly}{\pi} = 1,$
	where $\units{\model; \type}$ is the set of all units of model $\model$ at granularity $\type$,
	and $\applies{\unit}{\componly}{\pi}$ denotes whether $\comp{\cdot}$ modifies unit $\unit$ under the order $\pi$ (i.e., 1 if modified, 0 otherwise).
	\vspace{-1mm}
\end{definition}
%

Under disjoint selectivity, the compression order advantage $\coa{\componeonly \rightarrow \comptwoonly}$ is proportional to the cumulative sum of error difference $g(\cdot)$ across units assigned differently depending on the order as formulated in Theorem~\ref{thm:1}.
The underlying intuition is that the performance gap rises solely from units whose assignment varies with the order; for others, the error remains invariant and thus cancels out.
To illustrate, consider units $\unit_1$, $\unit_2$, and $\unit_3$ and compression methods $\compone{\cdot}$ and $\comptwo{\cdot}$.
If $\unit_1$ is always handled by $\compone{\cdot}$ regardless of the order, while $\unit_2$ and $\unit_3$ are assigned differently depending on the order, then the advantage $\coa{\componeonly \rightarrow \comptwoonly}$ is proportional to error difference of units $\unit_2$ and $\unit_3$.


\begin{theorem}[Compression Order Advantage under Disjoint Selectivity]
\label{thm:1}
	Suppose we compress a model $\model$ with two compression methods $\compone{\cdot}$ and $\comptwo{\cdot}$ with respective granularities $\typegran{\componeonly}$ and $\typegran{\comptwoonly}$, where disjoint selectivity holds.
	Then, under Assumption~\ref{assumption:general},
	the compression order advantage
	
	$\coa{\componeonly \rightarrow \comptwoonly}
	= \metric{(\comptwoonly \circ \componeonly)(\model)} - \metric{(\componeonly \circ \comptwoonly)(\model)}$ equals to $\beta \cdot \big(\sum_{\uniti \in \grouptwo} {g(\uniti)} - \sum_{\uniti \in \groupone} {g(\uniti)} \big)$,
	
	%
	%
	where $\beta$ is the coefficient between the model performance and total error induced from Assumption~\ref{assumption:general},
	$g(\uniti) = \big\| \erroru{\componeonly}{\uniti} \big\|_F^2 - \big\| \erroru{\comptwoonly}{\uniti} \big\|_F^2$ is the error gap according to the method applied,
	and $\groupone = \{\unit ~|~ \applies{\unit}{\componeonly}{\comptwoonly \circ \componeonly} = 1, ~ \applies{\unit}{\componeonly}{\componeonly \circ \comptwoonly} = 0 \}$
	and $\grouptwo = \{\unit ~|~ \applies{\unit}{\componeonly}{\comptwoonly \circ \componeonly} = 0, ~ \applies{\unit}{\componeonly}{\componeonly \circ \comptwoonly} = 1\}$ are groups of order-dependent units.
\end{theorem}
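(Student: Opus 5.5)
The plan is to write the performance of each composite model through the error--performance trade-off in Assumption~\ref{assumption:general}, decompose the total error unit by unit, and cancel the terms that do not depend on the order.

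\textbf{Step 1: per-order performance.} Applying the error--performance trade-off to each composite method gives, for $\pi \in \{\comptwoonly \circ \componeonly, \componeonly \circ \comptwoonly\}$,
\[
\metric{\model} - \metric{\pi(\model)} = \beta \sum_{\uniti} \|\erroru{\pi}{\uniti}\|_F^2 .
\]
Here the sum is taken over units at granularity $\lut(\typegran{\componeonly}, \typegran{\comptwoonly})$. This regrouping of the layer sum is legitimate because each such unit is a union of layer-level pieces, or a refinement of them. By layer-wise independence, the error in one unit is unaffected by what happens in others. So the Frobenius norm of a unit's error depends only on which method modified that unit.

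\textbf{Step 2: invoke disjoint selectivity.} Definition~\ref{def:disjoint_selectivity} says that under each order exactly one method modifies $\uniti$. Hence
\[
\|\erroru{\pi}{\uniti}\|_F^2 = \applies{\uniti}{\componeonly}{\pi}\,\|\erroru{\componeonly}{\uniti}\|_F^2 + \bigl(1-\applies{\uniti}{\componeonly}{\pi}\bigr)\|\erroru{\comptwoonly}{\uniti}\|_F^2 .
\]
This can be rewritten as $\|\erroru{\comptwoonly}{\uniti}\|_F^2 + \applies{\uniti}{\componeonly}{\pi}\, g(\uniti)$.

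\textbf{Step 3: subtract the two orders.} Taking the difference and noting that $\metric{\model}$ cancels,
\[
\coa{\componeonly \rightarrow \comptwoonly} = \beta \sum_{\uniti} \bigl(\applies{\uniti}{\componeonly}{\componeonly\circ\comptwoonly} - \applies{\uniti}{\componeonly}{\comptwoonly\circ\componeonly}\bigr) g(\uniti).
\]
The coefficient is $0$ for units assigned identically under both orders. It is $+1$ exactly on $\grouptwo$ and $-1$ exactly on $\groupone$, which yields the stated formula.

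\textbf{Main obstacle.} The delicate point is the claim that a unit's error under the composite equals the standalone error $\erroru{f_k}{\uniti}$ of whichever method touched it. A priori, the method applied second acts on an already-modified model. To handle this, I would argue that under disjoint selectivity the second method only alters units the first left untouched. Combined with layer-wise independence, this means its reconstruction error on those units is the same as when it is applied to the original model, so $\erroru{f_k}{\uniti}$ is well defined independent of order.
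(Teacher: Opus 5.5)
Your proof is correct and follows the paper's argument. The paper partitions the units at granularity $\lut(\typegran{\componeonly}, \typegran{\comptwoonly})$ into four groups $\groupone,\dots,\groupfour$ according to the pair of indicators $\applies{\unit}{\componeonly}{\comptwoonly\circ\componeonly}$ and $\applies{\unit}{\componeonly}{\componeonly\circ\comptwoonly}$, expands both total errors group by group, and cancels $\groupthree$ and $\groupfour$; your identity $\|\erroru{\pi}{\uniti}\|_F^2 = \|\erroru{\comptwoonly}{\uniti}\|_F^2 + \applies{\uniti}{\componeonly}{\pi}\, g(\uniti)$ is a compact version of that same bookkeeping. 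The paper takes for granted both that per-unit errors are order-independent and that the layer sum can be regrouped as a sum over units. Your ``main obstacle'' paragraph therefore adds justification the paper omits rather than filling a step you would otherwise be missing.
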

\begin{proof}
    Refer to Appendix~\ref{app:subsec:proof_thm1}.
\end{proof}


\smallsection{Monotonicity}
Under disjoint selectivity, we show that Hypothesis~\ref{hypothesis} holds when the two compression methods are \textit{well-designed}—that is, minimally disruptive to the model.
We examine this through a case study on pruning and quantization.
We assume a favorable scenario where pruning is configured to induce minimal degradation, and quantization introduces symmetric, zero-mean errors centered at the original values.
These assumptions are formalized in Assumption~\ref{assumption:pandq}.


\begin{assumption}
	\label{assumption:pandq}
	Given a model $\model$ with a set $\layers$ of layers and performance metric $\metric{\cdot}$, assume that:
	%
	\begin{itemize}[leftmargin=3.4mm, itemsep=-1mm, topsep=0mm]
		\item \textbf{Well-designed pruning $\prune{\cdot}$.}
		The pruning method is chosen from the set of pruning strategies that aim to preserve the model performance:
			$\prune{\cdot} \in \allprunes{\comprateprune}$
		where $\allprunes{\comprateprune} = \{\prunei{\cdot} | \comprate(\prunei{\model}) = \comprateprune,\ \metric{\model} - \metric{\prunei{\model}} \le \delta\}$ denotes the set of pruning strategies that satisfy the target ratio $\comprateprune$ while keeping performance degradation within a small budget $\delta$.
		\item \textbf{Well-designed quantization $\quant{\cdot}$.}
		For all layers, quantized outputs follow a symmetric distribution around the original values: 		
		$\forall \layeri \in \layers, \quant{\weighti}\quant{\acti} \sim \gaussian{\weighti \acti, \sigma_{\quantonly}^2\mat{I}},$ where $\gaussian{\cdot}$ is the Gaussian distribution.
		The quantization error is negligible (i.e., $\quant{\weighti}\quant{\acti} - \weighti\acti \ll \weighti\acti$).
	\end{itemize}
	\vspace{-2mm}
\end{assumption}
%

Theorem~\ref{thm:2} states that when disjoint selectivity holds and the compression methods are well-designed, $\coa{\quantonly \rightarrow \pruneonly}$ increases monotonically with CER difference\footnote{
		By definition, quantization serves as the baseline scale, so its CER equals its own compression ratio (i.e., $\compeqrate_{\quantonly} = \compratequant$).
		Therefore, $\compeqrateprune - \compratequant$ represents the CER difference between pruning and quantization.}
$\compeqrateprune - \compratequant$ for fixed $\comprateprune$.
Note that as $\compratequant$ decreases (i.e., $\metric{\quant{\model}}$ increases), both CER difference $\compeqrateprune - \compratequant$ and performance gap $\pg{\quantonly,\pruneonly}$ increase monotonically.
We show that $\coa{\quantonly \rightarrow \pruneonly}$ increases in this setting because the gap depends solely on order-dependent units under Theorem~\ref{thm:1}. 
We discuss the impact of $\comprateprune$ in Appendix~\ref{app:subsec:violation}.


\begin{theorem}[Monotonicity]
	\label{thm:2}
	Suppose we compress a model $\model$ with pruning $\prune{\cdot}$ and quantization $\quant{\cdot}$, where disjoint selectivity holds.
	Then, under Assumptions~\ref{assumption:general} and~\ref{assumption:pandq},
	given performance metric $\metric{\cdot}$ and two pairs of compression ratios $(\compratepruneone, \compratequantone)$ and $(\compratepruneone, \compratequanttwo)$,
	if CER difference increases
	\vspace{-1mm}
	\begin{equation*}
		\compeqratepruneone - \compratequantone > \compeqratepruneone - \compratequanttwo,
		\vspace{-1mm}
	\end{equation*}
	then, the compression order advantage increases monotonically:
	\vspace{-1mm}
	\begin{equation*}
		\coa{\model, \metriconly; \quant{\cdot; \compratequantone} \rightarrow \prune{\cdot; \compratepruneone}} \geq
		\coa{\model, \metriconly; \quant{\cdot; \compratequanttwo} \rightarrow \prune{\cdot; \compratepruneone}}.
		\vspace{-3mm}
	\end{equation*}
\end{theorem}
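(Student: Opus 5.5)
We would reduce Theorem~\ref{thm:2} to Theorem~\ref{thm:1} with $\componeonly = \quantonly$ and $\comptwoonly = \pruneonly$. The plan is to compute the per-unit error gap $g(\uniti)$ explicitly under Assumption~\ref{assumption:pandq}, and then show that decreasing $\compratequant$ can only enlarge the signed sum $\sum_{\grouptwo} g - \sum_{\groupone} g$.

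\textbf{Step 1 (per-unit errors).} For a unit $\uniti$ handled by pruning, the layer-level error is $\errorp{\weighti,\acti} = -\weighti\acti$, so $\|\erroru{\pruneonly}{\uniti}\|_F^2 = \|\weighti\acti\|_F^2$. This quantity does not depend on $\compratequant$. For a unit handled by quantization, the Gaussian part of Assumption~\ref{assumption:pandq} gives $\mathbb{E}\|\erroru{\quantonly}{\uniti}\|_F^2 = d_i\,\sigma_{\quantonly}^2$, where $d_i$ is the output dimension. Here $\sigma_{\quantonly}^2$ is nondecreasing in $\compratequant$, since fewer bits give a coarser grid. Hence
\[
g(\uniti) = d_i\sigma_{\quantonly}^2 - \|\weighti\acti\|_F^2 .
\]
By the negligibility clause this is strictly negative and of magnitude roughly $\|\weighti\acti\|_F^2$.

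\textbf{Step 2 (sign of the groups).} Using Theorem~\ref{thm:1},
\[
\coa{\quantonly\to\pruneonly} = \beta\Big(\sum_{\grouptwo} g - \sum_{\groupone} g\Big) = \beta\Big(\sum_{\groupone}\big(\|\weighti\acti\|_F^2 - d_i\sigma_{\quantonly}^2\big) - \sum_{\grouptwo}\big(\|\weighti\acti\|_F^2 - d_i\sigma_{\quantonly}^2\big)\Big).
\]
Both bracketed terms are nonnegative. We argue that the order $\quantonly\to\pruneonly$ lets a well-designed pruning (an element of $\allprunes{\comprateprune}$) select its units after seeing quantized weights. A near-optimal pruner then keeps the order-dependent units with large $\|\weighti\acti\|_F^2$ unpruned, so quantization handles them. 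Formally, we would compare the two pruning selections at equal ratio $\comprateprune$ and use the $\delta$-optimality of $\pruneonly$ to show $\groupone$ collects the larger-norm units.

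\textbf{Step 3 (monotonicity).} Pass from $\compratequanttwo$ to $\compratequantone$. The CER hypothesis $\compeqratepruneone - \compratequantone > \compeqratepruneone - \compratequanttwo$ is equivalent to $\compratequantone < \compratequanttwo$, hence to $\sigma_{\quantonly,1}^2 \le \sigma_{\quantonly,2}^2$. We then differentiate or difference $\coa{\cdot}$ in $\sigma_{\quantonly}^2$, keeping the group sets fixed:
\[
\frac{\partial\,\coa{\cdot}}{\partial\sigma_{\quantonly}^2} = \beta\Big(\sum_{\grouptwo} d_i - \sum_{\groupone} d_i\Big).
\]
Because pruning is applied at the fixed ratio $\comprateprune$ in both orders, the total dimension of units swapped between methods balances, giving $\sum_{\groupone} d_i = \sum_{\grouptwo} d_i$ when units have equal size. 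Then the $\sigma$-dependence cancels and only the pruned-norm terms remain. The inequality then reduces to showing the groups themselves change favorably. With smaller quantization noise, the $\quantonly$-first pruner's ranking is closer to the true ranking, so $\groupone$'s norm sum weakly grows and $\grouptwo$'s weakly shrinks, yielding $\ge$.

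\textbf{Main obstacle.} The hard part is Step 3's control of how $\groupone$ and $\grouptwo$ themselves depend on $\compratequant$. Theorem~\ref{thm:1} fixes them for one configuration, but they may change with the bit-width. I would first try to show that the order-dependent sets are nested as $\sigma_{\quantonly}$ shrinks, since quantization perturbs pruning scores less. If nestedness fails, I would fall back to a two-sided bound: swapped units contribute at most $2\,d_i\Delta\sigma^2$, which the negligibility assumption makes dominated.
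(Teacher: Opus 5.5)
\textbf{There is a genuine gap: Step~2 reverses which group holds the smaller-norm units, and Step~3 then fails.}

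Your skeleton is the same as the paper's. You specialize Theorem~\ref{thm:1} to $\componeonly=\quantonly$, $\comptwoonly=\pruneonly$. You note that, with the groups fixed, the $\sigma_{\quantonly}^2$ terms cancel; your explicit cancellation via $|\groupone|=|\grouptwo|$ and equal unit sizes is cleaner than the paper's Case~2. You then track how the order-dependent groups change as $\compratequant$ decreases. The paper, like you, simply posits that $|\groupone|=|\grouptwo|$ can only shrink, so your nestedness worry is shared rather than a defect.

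The problem is the orientation in Step~2. By definition, $\groupone$ is the set of units pruned when $\pruneonly$ is applied first but kept when $\quantonly$ is applied first. In the $\pruneonly$-first order the pruner sees the unperturbed model. So it is that selection which is optimal for the true error $\sum_{\text{pruned}}\|\weighti\acti\|_F^2$. The $\quantonly$-first pruner instead ranks units by the noisy outputs $\quant{\weighti}\quant{\acti}$; your own Step~3 premise, that the $\quantonly$-first ranking is a noisy version of the true one, says exactly this. The paper's exchange argument (a pruned unit can never have larger pruning error than a kept one, or swapping would help) then gives $\|\weighti\acti\|_F^2\le\|\weightj\actj\|_F^2$ for every $\layeri\in\groupone$ and $\layerj\in\grouptwo$. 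So $\groupone$ collects the \emph{smaller}-norm units, and $\coa{\quantonly\rightarrow\pruneonly}\approx\beta\big(\sum_{\layeri\in\groupone}\|\weighti\acti\|_F^2-\sum_{\layeri\in\grouptwo}\|\weighti\acti\|_F^2\big)\le 0$.

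This reversal breaks Step~3 in three ways:
\begin{itemize}
\item If smaller noise brings the $\quantonly$-first pruned set closer to the $\pruneonly$-first one, both symmetric-difference pieces $\groupone$ and $\grouptwo$ shrink. Both norm sums therefore weakly \emph{decrease}, so your claim that the norm sum over $\groupone$ weakly grows cannot hold.
\item Under your orientation ($\coa{\cdot}\ge 0$), shrinking groups would push the advantage down toward $0$, which is the opposite of the theorem.
\item Your fallback bound does not rescue this. Units moving between groups shift the dominant $\|\weighti\acti\|_F^2$ terms, not just $O(d_i\Delta\sigma^2)$ terms, and those are exactly the terms that need a sign.
\end{itemize}

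What you need is the paper's pairwise step with the correct orientation. Removing one pair $\layeri\in\groupone$, $\layerj\in\grouptwo$ changes the advantage by $-\beta\big(g(\layerj)-g(\layeri)\big)=\beta\big(\|\weightj\actj\|_F^2-\|\weighti\acti\|_F^2\big)$, up to quantization terms that cancel or are negligible. This change is nonnegative. Hence the advantage rises toward $0$ from below as the order-dependent set shrinks with decreasing $\compratequant$.
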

\begin{proof}
	Refer to Appendix~\ref{app:subsec:proof_thm2}.
\end{proof}
\vspace{-2mm}


\smallsection{Granularity and Interference}
Disjoint selectivity does not always hold in practical joint compression settings for pruning and quantization.
As pruning operates by fully discarding or keeping each unit, it always satisfies disjoint selectivity.
In contrast, quantization satisfies this condition only when its granularity is finer than or equal to that of pruning.
Figure~\ref{fig:interference} illustrates this:
(a) if $\typegran{\pruneonly} \geq \typegran{\quantonly}$, disjoint selectivity is preserved as pruning removes entire quantization units.
However, (b) if $\typegran{\pruneonly} < \typegran{\quantonly}$, pruning may partially eliminate a quantization unit, introducing regions where both methods interfere.

\begin{figure}[t]
	\centering
	\vspace{-2mm}
	\includegraphics[width=1\linewidth]{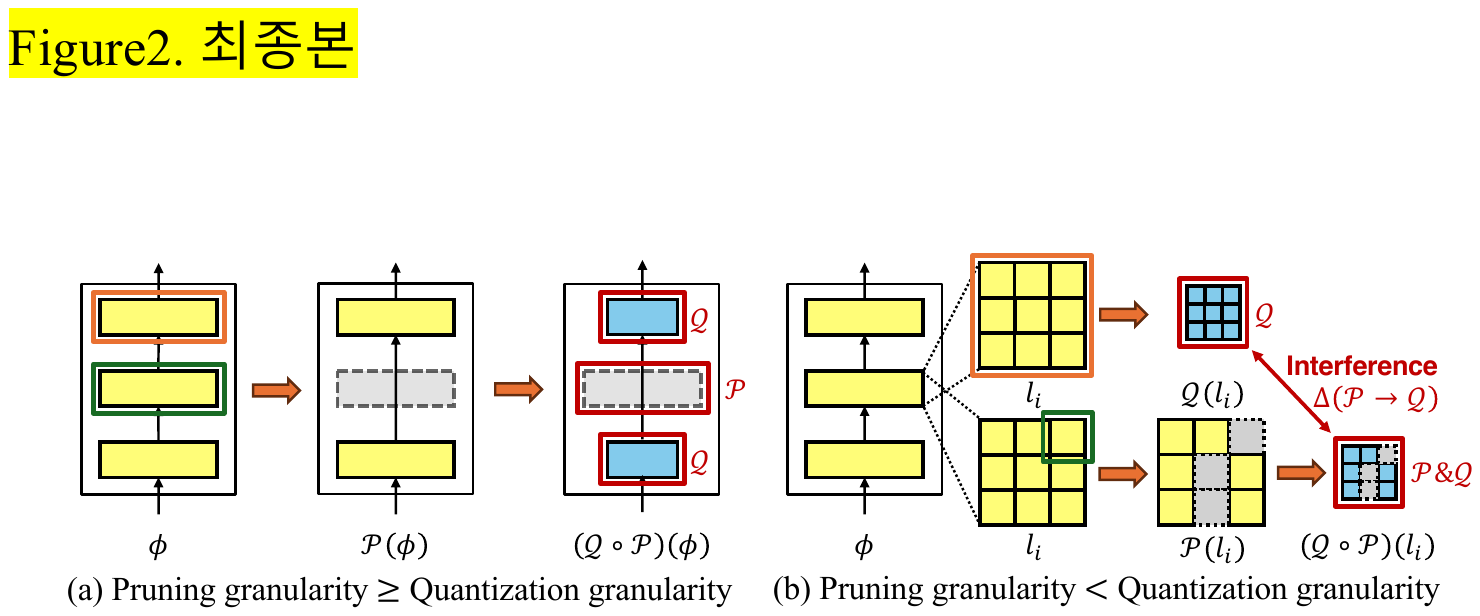}
	\vspace{-5mm}
	\caption{
		A case study of pruning $\prune{\cdot}$ and quantization $\quant{\cdot}$ on model $\model$.
		(a) if pruning granularity (green) is coarser or equal to quantization granularity (orange), disjoint selectivity holds.
		(b) Otherwise, partial removal of quantization units by pruning introduces extra error, termed \textit{interference} $\interferenceonly$.
	}
	\label{fig:interference}
	\vspace{-2mm}
\end{figure}

In general joint compression of two methods $\compone{\cdot}$ and $\comptwo{\cdot}$, this violation of disjoint selectivity
introduces additional error, which we define as  \textit{interference} $\interference{\model; \componeonly \rightarrow \comptwoonly}$, or simply $\interference{\componeonly \rightarrow \comptwoonly}$ in Definition~\ref{def:interference}.
Intuitively, interference quantifies how one method disturbs the behavior of the other.
%
%
\begin{definition}[Interference]
	\label{def:interference}	
	Given a model $\model$ and two methods $\compone{\cdot}$ and $\comptwo{\cdot}$,
	the interference
	%
	\vspace{-1mm}
	\begin{equation*}
		\interference{\model; \componeonly \rightarrow \comptwoonly}
		\coloneqq \sum_{\unit \in \mathbb{X}}
		\big( \erroru{\comptwoonly \circ \componeonly}{\unit} - \erroru{\comptwoonly}{\unit} \big),
		\text{where}~
		\mathbb{X} = \units{\model; \typegran{\comptwoonly}} ~\cap~ \{ \unit ~|~ \applies{\unit}{\comptwoonly}{\comptwoonly \circ \componeonly} = 1 \},
		\vspace{-1mm}
	\end{equation*}
	set $\units{\model; \type}$ contains all units of model $\model$ at type $\type$,
	$\applies{\unit}{\componly}{\pi}$ indicates whether unit $\unit$ is modified by $\comp{\cdot}$ under order $\pi$
	(1 if modified, 0 otherwise),
	and $\erroru{\comp{\cdot}}{\unit}$ denotes the error on $\unit$ after applying $\comp{\cdot}$.
	
\end{definition}

Interference may or may not occur, depending on its applied techniques.
A notable example is mixed-precision quantization, where treating each bit-width quantization as a distinct method satisfies disjoint selectivity, thereby avoiding interference.
In our primary focus of pruning and quantization, the magnitude of interference is determined by the pruning ratio $p$.
When interference occurs, quantization operates on weights altered by pruning, thus the deviation from the original distribution scales with the pruning ratio.
As pruning ratio $p$ increases, a larger portion of weights is removed before quantization, leading to stronger interference.
This additional error depends only on pruning and enters additively into the order advantage, and thus remains independent of quantization strength while preserving the monotonic trend.
Consequently, while exact outcomes may vary, $\coa{\componeonly \rightarrow \comptwoonly}$ remains a monotonic function of $\compeqrateprune - \compratequant$ even under interference.
In conclusion, Hypothesis~\ref{hypothesis} holds under both disjoint and interfering scenarios, highlighting its general validity. 

\section{Experimental Findings}
\label{sec:experiment}
\begin{figure}[t]
	\vspace{-3mm}
	\centering
	\includegraphics[width=0.9\linewidth]{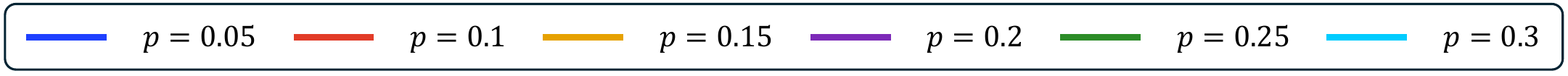}\\
	\hspace{0.01\linewidth}
	\begin{subfigure}[t]{0.32\linewidth}
		\centering
		\includegraphics[width=\linewidth]{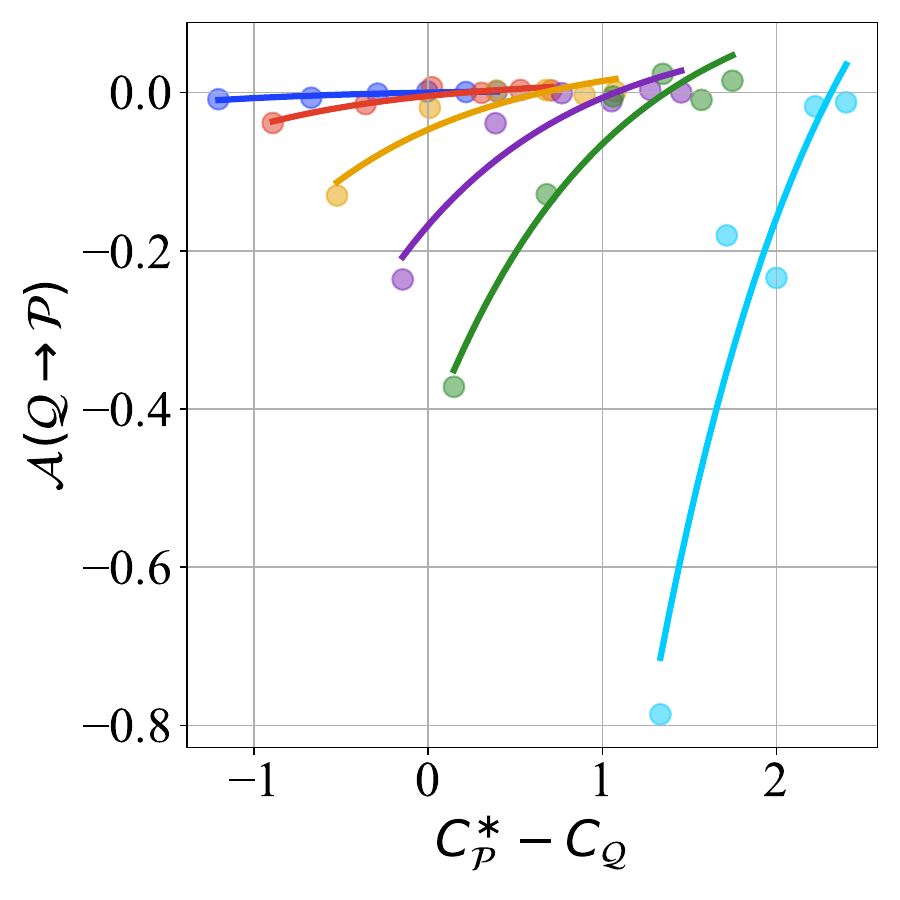}
		\caption{LLaMA 2 7B}
		\label{fig:llm_result:llama2_7B}
	\end{subfigure}
	\hfill
	\begin{subfigure}[t]{0.31\linewidth}
		\centering
		\includegraphics[width=\linewidth]{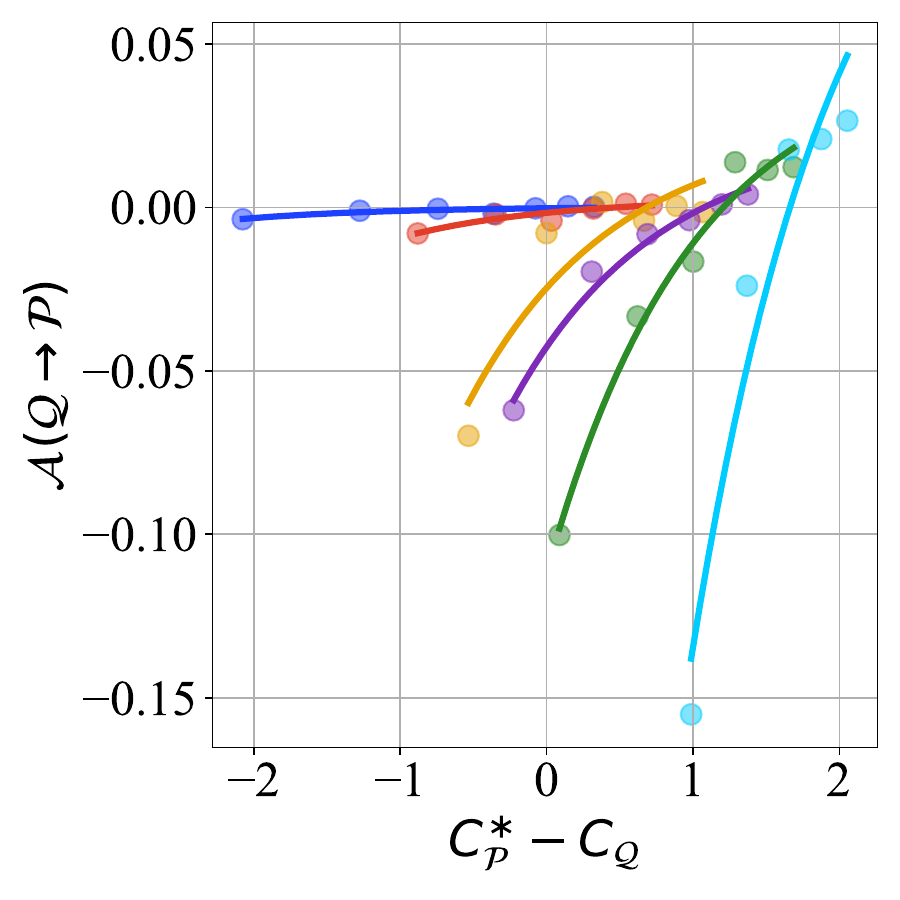}
		\caption{LLaMA 2 13B}
		\label{fig:llm_result:llama2_13B}
	\end{subfigure}%
	\hfill
	\begin{subfigure}[t]{0.31\linewidth}
		\centering
		\includegraphics[width=\linewidth]{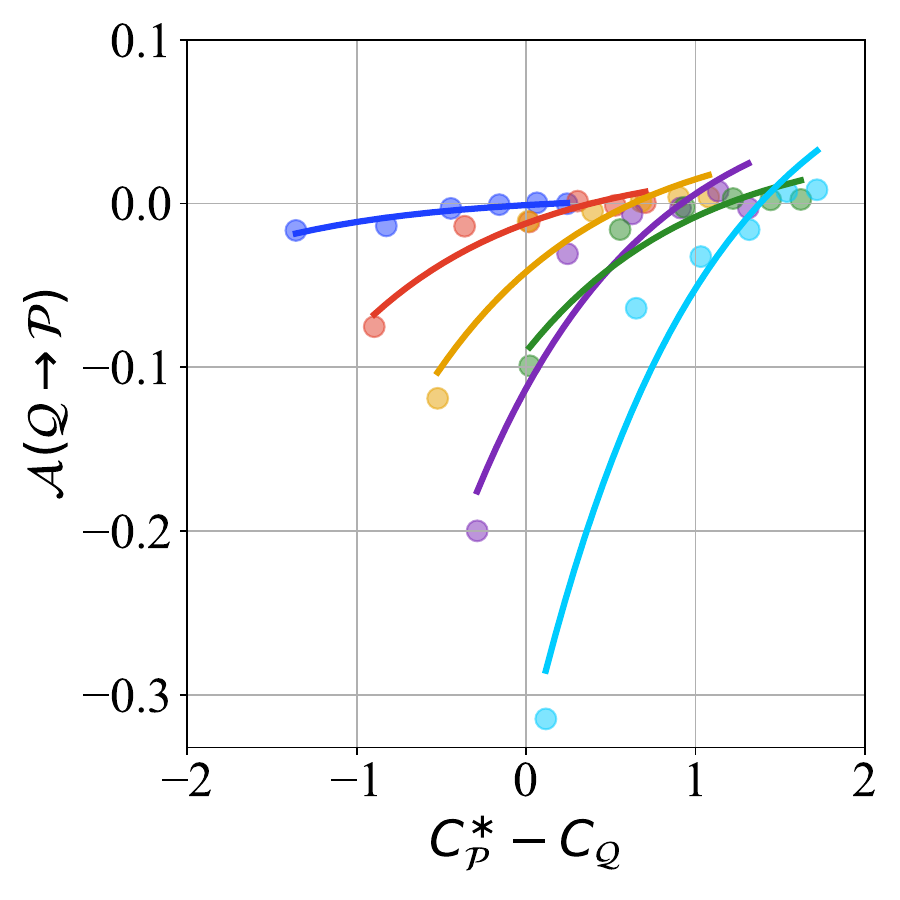}
		\caption{LLaMA 3 8B}
		\label{fig:llm_result:llama3_8B}
	\end{subfigure}
	\vspace{-1mm}
	\caption{
		Across diverse language models, the compression order advantage $\coa{\quantonly \rightarrow \pruneonly}$ increases monotonically with the CER difference $\compeqrateprune - \compratequant$.
		See Section~\ref{subsec:experiment_llms} for details.
	}
	\label{fig:llm_result}
	\vspace{-3mm}
\end{figure}

We empirically validate our hypothesis in joint compression scenarios, starting with pruning and quantization on language and vision models.
We then extend to general pipelines beyond them.

\vspace{-1mm}
\subsection{Experimental Setup}
\label{subsec:setup}
We briefly introduce the experimental setup.
Further setups are detailed in Appendix~\ref{app:sec:setup}.

\smallsection{Setup}
For language models, we focus on decoder-only LLMs, mainly LLaMA~\citep{LLaMA2} models.
The main metric is the negative of perplexity on WikiText-2~\citep{WikiText2} dataset; results on
commonsense reasoning tasks appear in
Appendix~\ref{app:subsec:csr}.
For vision models, we evaluate the classification accuracy of ResNet-18~\citep{ResNet} (CNNs) and DeiT-Base~\citep{DeiT} (ViTs) models on ImageNet~\citep{ImageNet} dataset.

\smallsection{Baselines}
We evaluate three pruning (SparseGPT~\citep{SparseGPT}, Wanda~\citep{Wanda}, and SLEB~\citep{SLEB}) and four quantization methods (RTN~\citep{RTN}, OPTQ~\citep{OPTQ}, QuaRot~\citep{QuaRot}, and QuaRot + OPTQ) for language models.
For vision models, we apply PRACTISE~\citep{PRACTISE} and N2UQ~\citep{N2UQ} for CNNs, and adopt SAViT~\citep{SAViT} and RepQ-ViT~\citep{RepQViT} for ViTs.

\vspace{-1mm}
\subsection{Analysis on Language Models}
\label{subsec:experiment_llms}

We verify whether Hypothesis~\ref{hypothesis} holds for language models.
Then, we analyze the effect of weight updates and rotations in quantization, and investigate the impact of pruning granularity on interference.

\smallsection{Compression Order Advantage by CER Differences}
We analyze how the compression order advantage $\coa{\quantonly \rightarrow \pruneonly}$ varies with CERs for SparseGPT ($\prune{\cdot}$) and QuaRot ($\quant{\cdot}$) across three models: LLaMA 2 7B, 13B, and LLaMA 3 8B.
Figure~\ref{fig:llm_result} confirms that $\coa{\quantonly \rightarrow \pruneonly}$ increases monotonically with the CER difference for all three models.
Each point in the figure corresponds to a compression ratio pair ($\comprateprune, \compratequant$), defined by pruning ratio $p \in [0.05, 0.1, 0.15, 0.2, 0.25, 0.3]$ and quantization bit-width $B_{\quantonly} \in [4, 5, 6, 7, 8]$.
We fit an exponential curve per pruning ratio $p$, reflecting the underlying trend.
In this setting, the x-axis $\compeqrateprune - \compratequant$ captures the difference between the intrinsic intensities of pruning and quantization, with both $\compeqrateprune$ and $\compratequant$ calibrated on the original model $\model$.
Consistent trends across diverse language model architectures and scales supports the validity of Hypothesis~\ref{hypothesis}; see Appendices~\ref{app:subsec:encoder_based} and~\ref{app:subsec:llm_results} for results on encoder-based and other decoder-only models, respectively.


\begin{tcolorbox}[colframe=black!60, colback=lightgrey, boxrule=1pt, arc=3pt, left=2pt, right=2pt, top=2pt, bottom=2pt]
\textbf{Finding 1.} The Progressive Intensity Hypothesis holds across diverse language models of varying scales and architectures, showing that stronger compression should be applied later.
\end{tcolorbox}


\smallsection{Weight Updates and Rotation-based Transformations}
We investigate the hypothesis under practical techniques such as weight-updates and rotations.
Figure~\ref{fig:llm_result_update_and_rotation} shows that Hypothesis~\ref{hypothesis} consistently holds across diverse combinations of methods.
Our framework is agnostic to the type of compression methods;
weight updates and rotations reduce quantization error, thereby increasing $\compeqrateprune$.

\begin{figure}[t]
	\centering
	\includegraphics[width=0.9\linewidth]{main_result_legend}\\
	\includegraphics[width=0.93\linewidth]{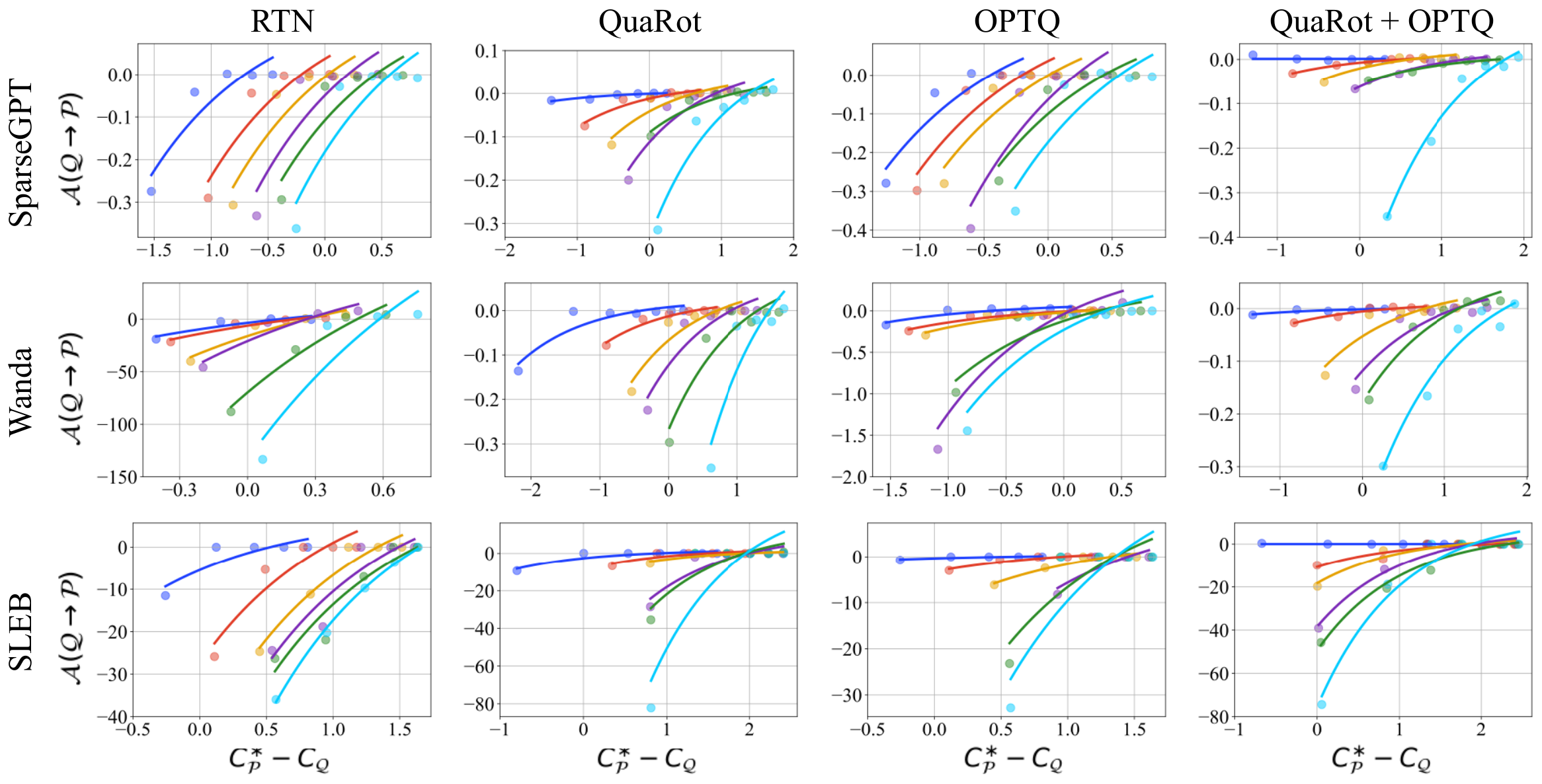}
	\vspace{-2mm}
	\caption{
		Compression order advantage $\coa{\quantonly \rightarrow \pruneonly}$ against CER difference $\compeqrateprune - \compratequant$ for three pruning $\prune{\cdot}$ and four quantization $\quant{\cdot}$ methods on a LLaMA 3 8B model.
		Our hypothesis consistently holds for language models regardless of pruning granularity, rotation, and weight updates.
	}
	\label{fig:llm_result_update_and_rotation}
	\vspace{-3mm}
\end{figure}

%
\begin{tcolorbox}[colframe=black!60, colback=lightgrey, boxrule=1pt, arc=3pt, left=2pt, right=2pt, top=2pt, bottom=2pt]
\textbf{Finding 2.} The hypothesis generalizes beyond specific design choices of pruning and quantization, remaining robust under weight-update and rotation methods.
\end{tcolorbox}
%

%
\begin{wrapfigure}[11]{R}{3.1cm}
	\vspace{-4mm}
	\centering
	\includegraphics[width=\linewidth]{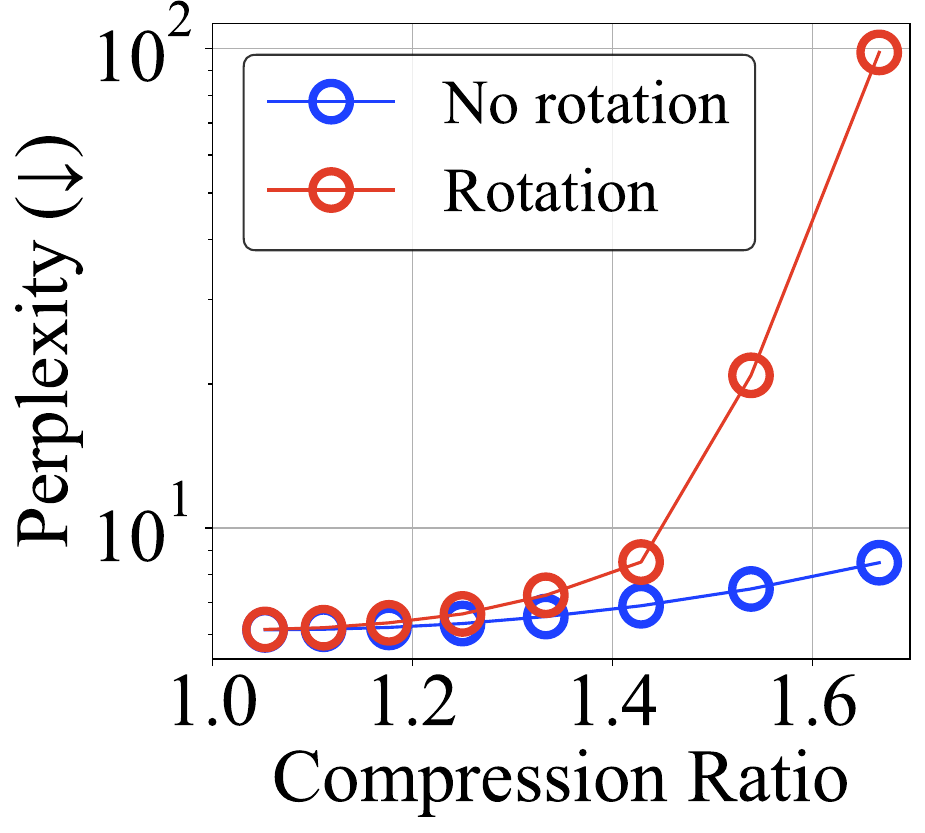}
	\vspace{-6mm}
	\caption{
		Rotation impact on pruning.
		See Section~\ref{subsec:experiment_llms} for details.
	}
	\label{fig:rotation_pruning}
\end{wrapfigure}
%

An intriguing phenomenon arises in pruning rotation-based methods: applying pruning after rotation results in a drastic performance drop compared to pruning without rotation.
Figure~\ref{fig:rotation_pruning} illustrates the perplexity changes of LLaMA 3 8B model pruned by SparseGPT, depending on the Hadamard rotation from QuaRot.
Increasing the pruning ratio amplifies the discrepancy between rotated and non-rotated settings, as pruning is applied without accounting for rotation.
This effect emerges because rotation may introduce two types of errors: \textit{matrix-wise errors} from residual components and \textit{element-wise errors} from altered pruning decisions.
We further discuss the details in  Appendix~\ref{app:subsec:rotation_pruning} and Table~\ref{app:tab:rotation_pruning}.
As rotation may intensify pruning, it is essential to design pruning approaches compatible with rotation-based quantization, the emerging \textit{de facto} standard.

%
\begin{tcolorbox}[colframe=black!60, colback=lightgrey, boxrule=1pt, arc=3pt, left=2pt, right=2pt, top=2pt, bottom=2pt]
\textbf{Finding 3.}
Rotation amplifies pruning effects, underscoring the necessity of designing rotation-aware pruning methods.
\end{tcolorbox}
%


\begin{wraptable}[10]{r}{4.8cm}
	\vspace{-0.5mm}
	\centering
	\caption{
		$\coa{\quantonly \rightarrow \pruneonly}$ by quantization ratio $\compratequant$.}
	\renewcommand{\arraystretch}{1.1}
	\resizebox{\linewidth}{!}{
		\begin{tabular}{ccc}
			\hline
			\toprule
			$\compratequant$ $(B_{\quantonly})$ & SparseGPT & SLEB \\
			\midrule
			1.78 (9) & 0.002 & 0 \\
			2.00 (8) & 0.001 & 0 \\
			2.28 (7) & -0.003 & 0 \\
			2.68 (6) & -0.013 & 0 \\
			3.20 (5) & -0.017& -0.057 \\
			4.00 (4) & -49.899 & -9.379 \\
			\bottomrule
			\hline
		\end{tabular}
	}
	\label{tab:interference}
\end{wraptable}

\smallsection{Pruning Granularity and Interference}
We verify the presence of interference by comparing results across different pruning granularities.
Table~\ref{tab:interference} reports $\coa{\quantonly \rightarrow \pruneonly}$ across different quantization bit-width $B_{\quantonly}$ when applying two 5\% pruning methods with QuaRot.
For SLEB, which applies structured pruning at sublayer level, there exists a regime where no layers differ in their pruning status across orders, leading to an exact $\coa{\quantonly \rightarrow \pruneonly}$ of zero (i.e., no interference).
By contrast, SparseGPT, as an unstructured pruning method, exhibits interference in low $\compratequant$ ranges.
Notably, empirical results suggest that interference also exhibits a monotonic trend regarding $\compratequant$.

\vspace{1mm}
\begin{tcolorbox}[colframe=black!60, colback=lightgrey, boxrule=1pt, arc=3pt, left=2pt, right=2pt, top=2pt, bottom=2pt]
\textbf{Finding 4.} Pruning granularity determines interference: structured pruning shows no interference in early regimes, while unstructured pruning exhibits monotonic interference.
\end{tcolorbox}


\noindent
\begin{figure}
	\begin{minipage}[t]{0.64\linewidth}
		\centering
		\begin{subfigure}[t]{0.48\linewidth}
			\centering
			\includegraphics[width=\linewidth]{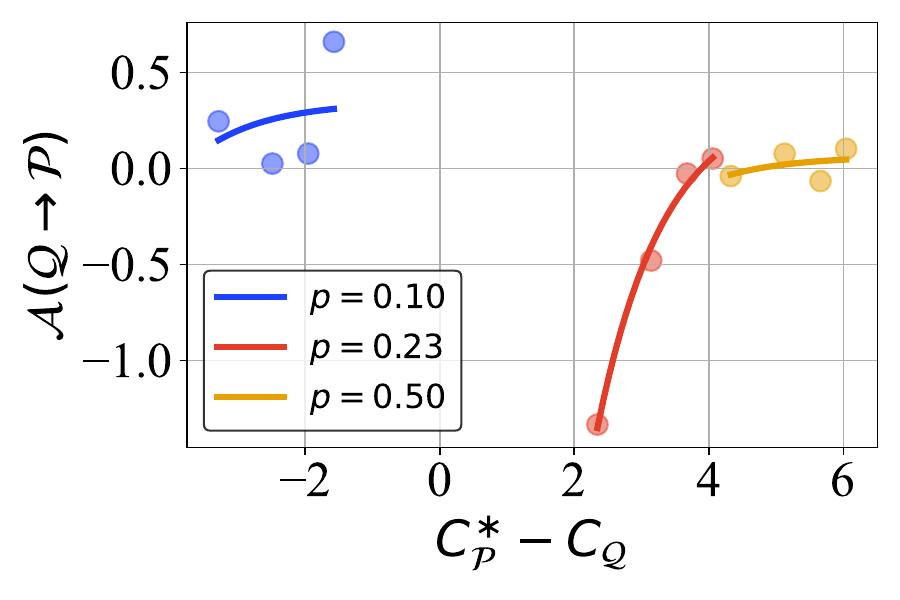}
			\caption{ResNet-18 (CNN)}
			\label{fig:vision:cnn}
		\end{subfigure}
		\hfill
		\begin{subfigure}[t]{0.48\linewidth}
			\centering
			\includegraphics[width=\linewidth]{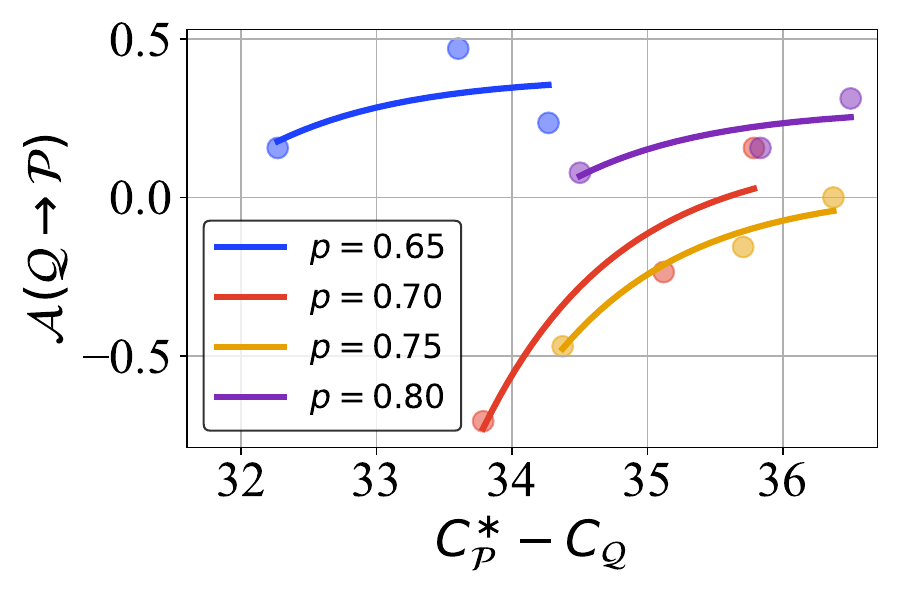}
			\caption{DeiT-Base (ViTs)}
			\label{fig:vision:vit}
		\end{subfigure}
		\vspace{-1mm}
		\caption{The Progressive Intensity Hypothesis holds across diverse vision models.
			See Section~\ref{subsec:experiment_vision} for details.}
		\label{fig:vision}
	\end{minipage}
	\hfill
	\begin{minipage}[t]{0.34\linewidth}
		\vspace{-29mm}
		\centering
		\includegraphics[width=\linewidth]{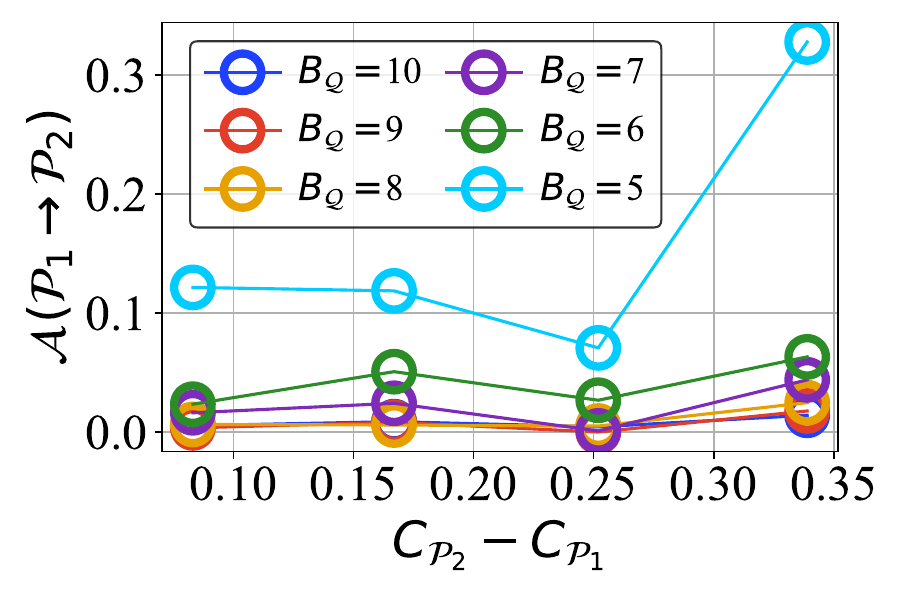}
		\vspace{-2mm}
		\caption{Multi-stage compression results on LLaMA 3 8B model.}
		\label{fig:multi_stage}
	\end{minipage}
\end{figure}
\vspace{-2mm}
%

\vspace{-2mm}
\subsection{Analysis on Vision Models}
\label{subsec:experiment_vision}
\smallsection{CNN and ViT Models}
We verify whether Hypothesis~\ref{hypothesis} holds for vision models, focusing on CNNs and ViTs.
In Figure~\ref{fig:vision}, we analyze the behavior of ResNet-18 and DeiT-Base models under PRACTISE ($\prune{\cdot}$) and N2UQ ($\quant{\cdot}$), and SAViT ($\prune{\cdot}$) and RepQ-ViT ($\quant{\cdot}$) methods, respectively.
The results confirm that both $\coa{\quantonly \rightarrow \pruneonly}$ and $\compeqrateprune - \compratequant$ increase monotonically for both models, regardless of the pruning or quantization configurations.
Notably, the compression order advantage is substantially larger than that observed in language models, where it is often marginally positive.

\vspace{1mm}
\begin{tcolorbox}[colframe=black!60, colback=lightgrey, boxrule=1pt, arc=3pt, left=2pt, right=2pt, top=2pt, bottom=2pt]
\textbf{Finding 5.}
Vision models consistently satisfy the Progressive Intensity Hypothesis regardless of architecture and applied compression techniques.
\end{tcolorbox}
%

\subsection{Beyond Pruning and Quantization: Toward General Pipelines}
\label{subsec:beyond}


We extend the Progressive Intensity Hypothesis to general compression pipelines.
These results align with the $n$-method ordering formulation in Appendix~\ref{app:subsec:theory_generalize}.

\smallsection{Multi-stage Compression}
Pruning is generally performed in multiple stages to mitigate performance degradation. In Figure~\ref{fig:multi_stage}, we investigate the impact of compression order by alternately applying SparseGPT ($\prune{\cdot}$) and QuaRot ($\quant{\cdot}$) to the LLaMA 3 8B model where the sum of pruning ratios $p_1 + p_2 = 0.3$ (e.g., $\prune{\cdot; \compratepruneone} \rightarrow \quant{\cdot} \rightarrow \prune{\cdot; \comprateprunetwo}$).
Our results consistently demonstrate positive advantages, indicating that stronger pruning placed later improves performance under fixed quantization, confirming that our hypothesis holds not only for two stages but also for multiple ones.

\vspace{1mm}
\begin{tcolorbox}[colframe=black!60, colback=lightgrey, boxrule=1pt, arc=3pt, left=2pt, right=2pt, top=2pt, bottom=2pt]
	\textbf{Finding 6.}
	Beyond pairwise orders, the hypothesis holds in practical multi-stage compression, indicating that scheduling stronger compression later in the sequence yields higher accuracy.
\end{tcolorbox}
\vspace{1mm}


\noindent
\begin{figure}
	\centering
\includegraphics[width=0.9\linewidth]{main_result_legend}\\
	\begin{minipage}[t]{0.31\linewidth}
		\centering
		\includegraphics[width=\linewidth]{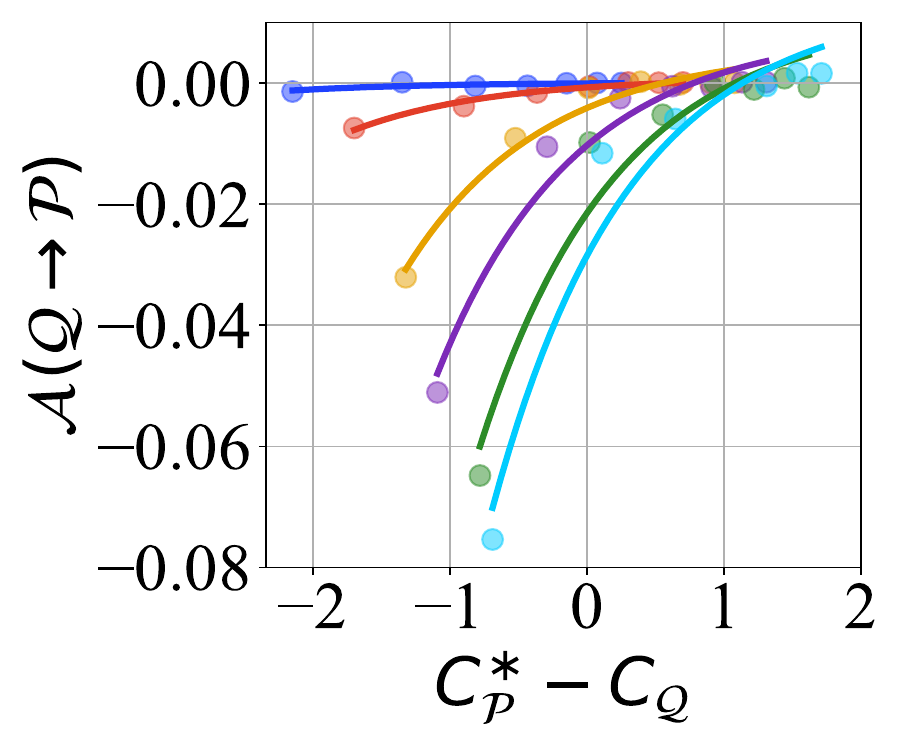}
 		\caption{Joint compression results with LoRA adapters.}
		\label{fig:peft}
	\end{minipage}
	\hfill
	\begin{minipage}[t]{0.31\linewidth}
		\centering
		\includegraphics[width=\linewidth]{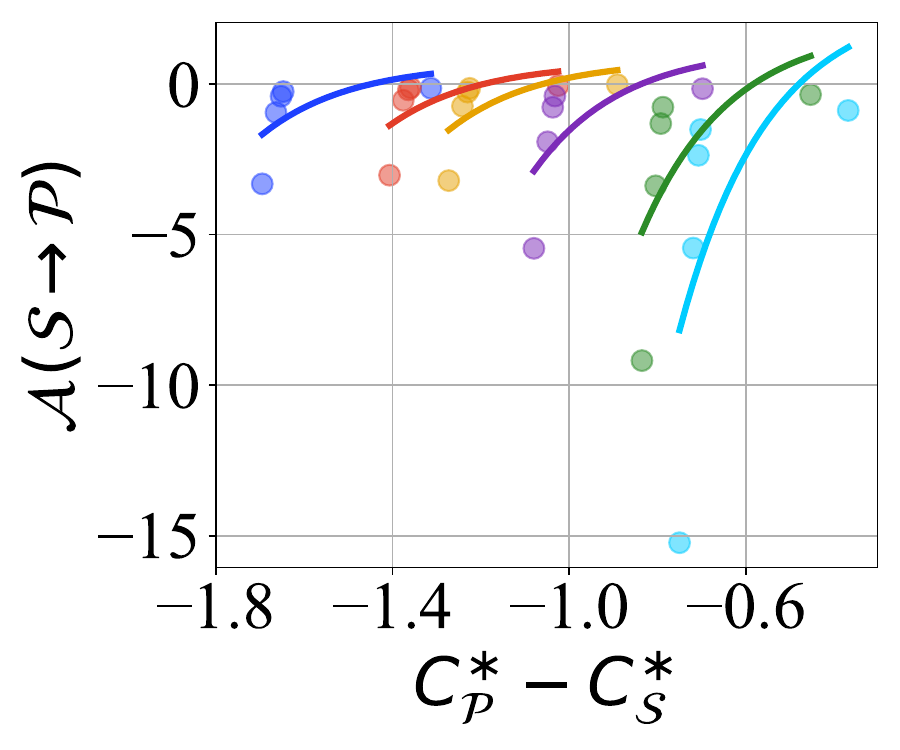}
		\caption{Performance under pruning and parameter sharing.}
		\label{fig:ps}
	\end{minipage}
	\hfill
	\begin{minipage}[t]{0.31\linewidth}
		\centering
		\includegraphics[width=\linewidth]{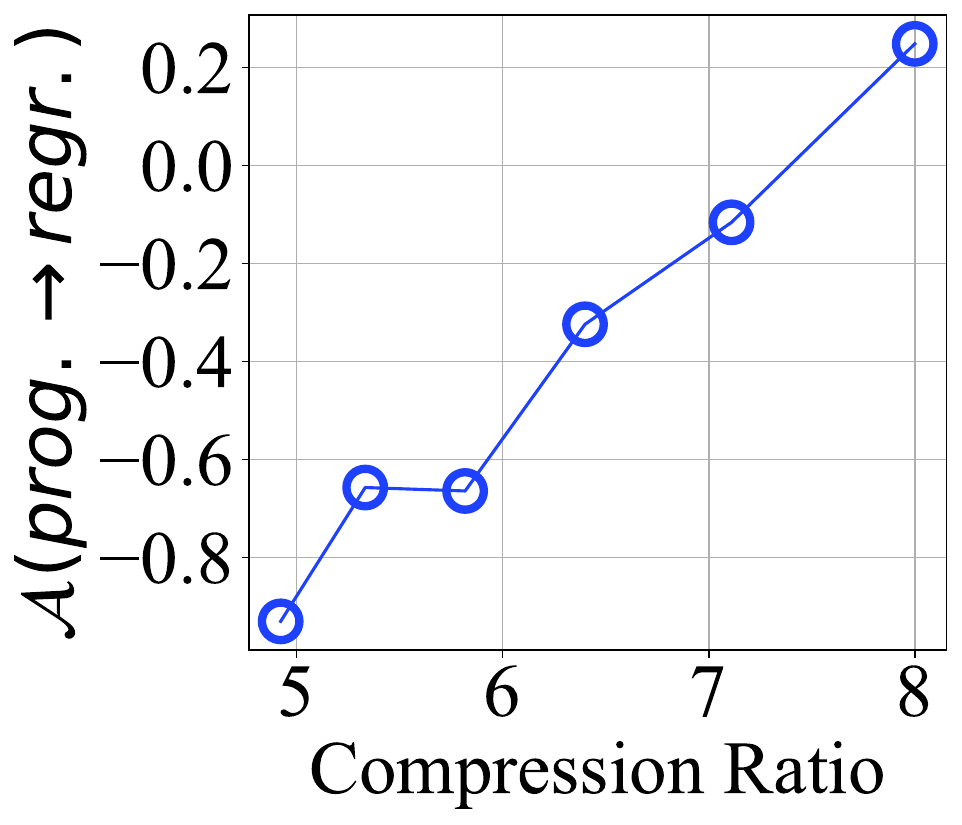}
		\caption{The impact of compression order in MPQ.}
		\label{fig:mpq}
	\end{minipage}
\end{figure}
\vspace{-5mm}

\smallsection{Parameter Efficient Fine-tuning}
Parameter Efficient Fine-Tuning (PEFT), which introduces lightweight low-rank adapters to mitigate compression-induced performance degradation, has recently become a widely adopted practical approach.
We investigate whether the proposed principle remains valid in scenarios where PEFT is applied alongside pruning and quantization.
Figure~\ref{fig:peft} confirms that the same pattern holds for LLaMA 3 8B when combined with SparseGPT ($\pruneonly(\cdot)$), RTN ($\quantonly(\cdot)$), and LoRA~\citep{LoRA} (PEFT).
Applying LoRA after quantization produced a similar corrective effect to rotation, effectively compensating quantization-induced performance loss.
Overall, the progressive intensity hypothesis remains robust under practical training pipelines that include PEFT.


\begin{tcolorbox}[colframe=black!60, colback=lightgrey, boxrule=1pt, arc=3pt, left=2pt, right=2pt, top=2pt, bottom=2pt]
	\textbf{Finding 7.}
	PEFT preserves the hypothesis that stronger compression should be applied later, as post-quantization LoRA effectively restores accuracy and maintains the expected ordering.
\end{tcolorbox}


\vspace{1mm}
\smallsection{Parameter Sharing}
Beyond pruning and quantization, parameter sharing $\mathcal{S}(\cdot)$ is an independent compression technique that ties multiple layers into a unified set of weight parameters.
We conduct joint compression experiments with pruning and parameter sharing to assess whether the principle generalizes beyond the pruning–quantization setting.
In Figure~\ref{fig:ps}, results on LLaMA 2 7B model with Basis Sharing~\citep{BasisSharing} ($\mathcal{S}(\cdot)$) and magnitude-based pruning~\citep{Magnitude} ($\pruneonly(\cdot)$) confirm that the same ordering effect emerges as well.


\begin{tcolorbox}[colframe=black!60, colback=lightgrey, boxrule=1pt, arc=3pt, left=2pt, right=2pt, top=2pt, bottom=2pt]
	\textbf{Finding 8.}
	Joint compression with parameter sharing also follows the hypothesis: placing the stronger operation later yields better performance.
\end{tcolorbox}

%
\vspace{1mm}
\smallsection{Mixed-precision Quantization}
As discussed in Section~\ref{sec:theory}, Mixed Precision Quantization (MPQ) can be formulated as a joint compression problem where each bit-width allocation acts as a separate compression method, satisfying disjoint selectivity.
Figure~\ref{fig:mpq} illustrates the effect of compression order in MPQ, where we sequentially allocate quantization bit-widths following HAWQ-V2~\citep{HAWQ-V2} on ResNet-18 model under a fixed average bit-width (i.e., identical overall compression ratio), comparing progressive (prog.; 8→2 bits) and regressive (regr.; 2→8 bits) sequential allocations.
As the total compression ratio increases, progressive allocation increasingly outperforms regressive allocation in terms of $\coa{\text{prog.}\rightarrow\text{regr.}}$, supporting our hypothesis under MPQ settings.


\begin{tcolorbox}[colframe=black!60, colback=lightgrey, boxrule=1pt, arc=3pt, left=2pt, right=2pt, top=2pt, bottom=2pt]
	\textbf{Finding 9.}
	The Progressive Intensity Hypothesis also holds in MPQ, with progressive bit allocation outperforming regressive allocation since lower-bit quantization acts as stronger compression.
\end{tcolorbox}


\section{Conclusion}
\label{sec:conclusion}
We address the under-explored problem of joint compression order optimization and provide both theoretical and experimental evidences.
The Progressive Intensity Hypothesis (Hypothesis~\ref{hypothesis}) offers a simple yet powerful rule: weaker perturbations first, stronger ones later.
Future works include investigating interference in more complex pipelines, providing explicit predictive rules, and automating compression order selection (see Appendix~\ref{app:subsec:remarks}).

\clearpage
 \section*{Acknowledgments}
 
This work was supported by Institute of Information \& communications Technology Planning \& Evaluation (IITP) grant funded by the Korea government (MSIT)
[No.RS-2020-II200894, Flexible and Efficient Model Compression Method for Various Applications and Environments],
[No.RS-2021-II211343, Artificial Intelligence Graduate School Program (Seoul National University)],
[No.RS-2024-00509257, Global AI Frontier Lab],
and [No.RS-2025-25442338, AI star Fellowship Support Program(Seoul National University)].
This work was supported by Youlchon Foundation.
The Institute of Engineering Research at Seoul National University provided research facilities for this work.
The ICT at Seoul National University provided research facilities for this study.
U Kang is the corresponding author. 

\bibliography{main}

@inproceedings{LampQ,
	title={LampQ: Towards Accurate Layer-wise Mixed Precision Quantization for Vision Transformers},
	author={Kim, Minjun and Lee, Jaeri and Kim, Jongjin and Yun, Jeongin and Kwon, Yongmo and Kang, U},
	booktitle={AAAI},
	year={2026}
}

@article{Falcon,
  author       = {Jun{-}Gi Jang and
                  Chun Quan and
                  Hyun Dong Lee and
                  U Kang},
  title        = {Falcon: lightweight and accurate convolution based on depthwise separable
                  convolution},
  journal      = {Knowl. Inf. Syst.},
  volume       = {65},
  number       = {5},
  pages        = {2225--2249},
  year         = {2023},
}

@article{Pea-KD,
    author = {Cho, Ikhyun AND Kang, U},
    journal = {PLOS ONE},
    publisher = {Public Library of Science},
    title = {Pea-KD: Parameter-efficient and accurate Knowledge Distillation on BERT},
    year = {2022},
    month = {02},
    volume = {17},
    pages = {1-12},
    number = {2},
}

@article{MustaD,
    author = {Kim, Junghun AND Jung, Jinhong AND Kang, U.},
    journal = {PLOS ONE},
    publisher = {Public Library of Science},
    title = {Compressing deep graph convolution network with multi-staged knowledge distillation},
    year = {2021},
    month = {08},
    volume = {16},
    pages = {1-18},
    number = {8},
}

@article{SongHanSurvey,
        author={Deng, Lei and Li, Guoqi and Han, Song and Shi, Luping and Xie, Yuan},
        journal={Proceedings of the IEEE},
        title={Model Compression and Hardware Acceleration for Neural Networks: A Comprehensive Survey},
        year={2020},
        volume={108},
        number={4},
        pages={485-532},
}

@incollection{GholamiSurvey,
        title={A survey of quantization methods for efficient neural network inference},
        author={Gholami, Amir and Kim, Sehoon and Dong, Zhen and Yao, Zhewei and Mahoney, Michael W and Keutzer, Kurt},
        booktitle={Low-Power Computer Vision},
        pages={291-326},
        year={2022},
        publisher={Chapman and Hall/CRC}
}

@inproceedings{KPrune,
        title={Accurate Retraining-free Pruning for Pretrained Encoder-based Language Models},
        author={Park, Seungcheol and Choi, Hojun and Kang, U},
        booktitle={ICLR},
        year={2024}
}

@inproceedings{ImageNet,
        title = {ImageNet: A Large-Scale Hierarchical Image Database},
        author = {Deng, J. and Dong, W. and Socher, R. and Li, L.-J. and Li, K. and Fei-Fei, L.},
        booktitle = {CVPR},
        year = {2009}
}

@inproceedings{ResNet,
        title={Deep residual learning for image recognition},
        author={He, Kaiming and Zhang, Xiangyu and Ren, Shaoqing and Sun, Jian},
        booktitle={CVPR},
        year={2016}
}

@inproceedings{RTN,
        title={Deep learning with limited numerical precision},
        author={Gupta, Suyog and Agrawal, Ankur and Gopalakrishnan, Kailash and Narayanan, Pritish},
        booktitle={ICML},
        year={2015},
}

@inproceedings{BRECQ,
        title={BRECQ: Pushing the Limit of Post-Training Quantization by Block Reconstruction},
        author={Li, Yuhang and Gong, Ruihao and Tan, Xu and Yang, Yang and Hu, Peng and Zhang, Qi and Yu, Fengwei and Wang, Wei and Gu, Shi},
        booktitle={ICLR},
        year={2021}
}

@inproceedings{OPTQ,
        author       = {Frantar, Elias and Ashkboos, Saleh and Hoefler, Torsten and Alistarh, Dan},
        title        = {{OPTQ:} Accurate Quantization for Generative Pre-trained Transformers},
        booktitle    = {ICLR},
        year         = {2023},
}

@inproceedings{DeiT,
	title={Training data-efficient image transformers \& distillation through attention},
	author={Touvron, Hugo and Cord, Matthieu and Douze, Matthijs and Massa, Francisco and Sablayrolles, Alexandre and J{\'e}gou, Herv{\'e}},
	booktitle={ICML},
	year={2021},
}

@inproceedings{RepQViT,
	title={Repq-vit: Scale reparameterization for post-training quantization of vision transformers},
	author={Li, Zhikai and Xiao, Junrui and Yang, Lianwei and Gu, Qingyi},
	booktitle={ICCV},
	year={2023}
}

@article{SensiMix,
	author = {Piao, Tairen AND Cho, Ikhyun AND Kang, U},
	journal = {PLOS ONE},
	publisher = {Public Library of Science},
	title = {SensiMix: Sensitivity-Aware 8-bit index \& 1-bit value mixed precision quantization for BERT compression},
	year = {2022},
	volume = {17},
	pages = {1-22},
	number = {4},
}

@article{PET,
	author = {Jeon, Hyojin AND Park, Seungcheol AND Kim, Jin-Gee AND Kang, U},
	journal = {PLOS ONE},
	publisher = {Public Library of Science},
	title = {PET: Parameter-efficient Knowledge Distillation on Transformer},
	year = {2023},
	month = {07},
	volume = {18},
	pages = {1-21},
	number = {7},	
}

@inproceedings{SynQ,
    title={{SynQ}: Accurate Zero-shot Quantization by Synthesis-aware Fine-tuning},
    author={Kim, Minjun and Kim, Jongjin and Kang, U},
    booktitle={ICLR},
    year={2025}
}

@inproceedings{Interplay,
title={Effective Interplay between Sparsity and Quantization: From Theory to Practice},
author={Simla Burcu Harma and Ayan Chakraborty and Elizaveta Kostenok and Danila Mishin and Dongho Ha and Babak Falsafi and Martin Jaggi and Ming Liu and Yunho Oh and Suvinay Subramanian and Amir Yazdanbakhsh},
booktitle={ICLR},
year={2025},
}

@inproceedings{SparseGPT,
	title = {{S}parse{GPT}: Massive Language Models Can be Accurately Pruned in One-Shot},
	author =  {Frantar, Elias and Alistarh, Dan},
	booktitle = {ICML},
	year = 	 {2023},
}

@inproceedings{QuaRot,
	title={Quarot: Outlier-free 4-bit inference in rotated llms},
	author={Ashkboos, Saleh and Mohtashami, Amirkeivan and Croci, Maximilian L and Li, Bo and Cameron, Pashmina and Jaggi, Martin and Alistarh, Dan and Hoefler, Torsten and Hensman, James},
	booktitle={NeurIPS},
	year={2024}
}

@inproceedings{SLEB,
	title={Sleb: Streamlining llms through redundancy verification and elimination of transformer blocks},
	author={Song, Jiwon and Oh, Kyungseok and Kim, Taesu and Kim, Hyungjun and Kim, Yulhwa and Kim, Jae-Joon},
	booktitle={ICML},
	year={2024}
}

@article{MCforLLMSurvey,
	title={A survey on model compression for large language models},
	author={Zhu, Xunyu and Li, Jian and Liu, Yong and Ma, Can and Wang, Weiping},
	journal={Transactions of the Association for Computational Linguistics},
	volume={12},
	pages={1556--1577},
	year={2024},
}

@inproceedings{SVDQuant,
	title={{SVDQ}uant: Absorbing Outliers by Low-Rank Component for 4-Bit Diffusion Models},
	author={Muyang Li and Yujun Lin and Zhekai Zhang and Tianle Cai and Junxian Guo and Xiuyu Li and Enze Xie and Chenlin Meng and Jun-Yan Zhu and Song Han},
	booktitle={ICLR},
	year={2025},
}

@inproceedings{SVD-LLM,
	title={{SVD}-{LLM}: Truncation-aware Singular Value Decomposition for Large Language Model Compression},
	author={Xin Wang and Yu Zheng and Zhongwei Wan and Mi Zhang},
	booktitle={ICLR},
	year={2025},
}

@inproceedings{BoostViT,
	title={Boost vision transformer with gpu-friendly sparsity and quantization},
	author={Yu, Chong and Chen, Tao and Gan, Zhongxue and Fan, Jiayuan},
	booktitle={CVPR},
	year={2023}
}

@inproceedings{Automatic,
	title={Automatic joint structured pruning and quantization for efficient neural network training and compression},
	author={Qu, Xiaoyi and Aponte, David and Banbury, Colby and Robinson, Daniel P and Ding, Tianyu and Koishida, Kazuhito and Zharkov, Ilya and Chen, Tianyi},
	booktitle={CVPR},
	year={2025}
}

@inproceedings{DeepComp,
	title={Deep compression of pre-trained transformer models},
	author={Wang, Naigang and Liu, Chi-Chun Charlie and Venkataramani, Swagath and Sen, Sanchari and Chen, Chia-Yu and El Maghraoui, Kaoutar and Srinivasan, Vijayalakshmi Viji and Chang, Leland},
	booktitle={NeurIPS},
	year={2022}
}

@inproceedings{AdaptiveQandP,
	title={Adaptive quantization and pruning of deep neural networks via layer importance estimation},
	author={Shinde, Tushar},
	booktitle={Workshop on Machine Learning and Compression, NeurIPS 2024},
	year={2024}
}

@inproceedings{UnderstandInt4,
	title={Understanding int4 quantization for language models: latency speedup, composability, and failure cases},
	author={Wu, Xiaoxia and Li, Cheng and Aminabadi, Reza Yazdani and Yao, Zhewei and He, Yuxiong},
	booktitle={ICML},
	year={2023},
}

@article{PQSurvey,
	title={Pruning and quantization for deep neural network acceleration: A survey},
	author={Liang, Tailin and Glossner, John and Wang, Lei and Shi, Shaobo and Zhang, Xiaotong},
	journal={Neurocomputing},
	volume={461},
	pages={370--403},
	year={2021},
	publisher={Elsevier}
}

@article{PsandQs,
	title={Ps and qs: Quantization-aware pruning for efficient low latency neural network inference},
	author={Hawks, Benjamin and Duarte, Javier and Fraser, Nicholas J and Pappalardo, Alessandro and Tran, Nhan and Umuroglu, Yaman},
	journal={Frontiers in Artificial Intelligence},
	volume={4},
	pages={676564},
	year={2021},
	publisher={Frontiers Media SA}
}

@inproceedings{PruneVSQuant,
	title={Pruning vs quantization: Which is better?},
	author={Kuzmin, Andrey and Nagel, Markus and Van Baalen, Mart and Behboodi, Arash and Blankevoort, Tijmen},
	booktitle={NeurIPS},
	year={2023}
}

@inproceedings{APQ,
	title={Apq: Joint search for network architecture, pruning and quantization policy},
	author={Wang, Tianzhe and Wang, Kuan and Cai, Han and Lin, Ji and Liu, Zhijian and Wang, Hanrui and Lin, Yujun and Han, Song},
	booktitle={CVPR},
	year={2020}
}

@inproceedings{SmoothQuant,
	title={Smoothquant: Accurate and efficient post-training quantization for large language models},
	author={Xiao, Guangxuan and Lin, Ji and Seznec, Mickael and Wu, Hao and Demouth, Julien and Han, Song},
	booktitle={ICML},
	year={2023},
}

@inproceedings{Dejavu,
	title={Deja vu: Contextual sparsity for efficient llms at inference time},
	author={Liu, Zichang and Wang, Jue and Dao, Tri and Zhou, Tianyi and Yuan, Binhang and Song, Zhao and Shrivastava, Anshumali and Zhang, Ce and Tian, Yuandong and Re, Christopher and others},
	booktitle={ICML},
	year={2023},
}

@inproceedings{oBERT,
	title = {The Optimal {BERT} Surgeon: Scalable and Accurate Second-Order Pruning for Large Language Models},
	author = {Kurtic, Eldar  and
	Campos, Daniel  and
	Nguyen, Tuan  and
	Frantar, Elias  and
	Kurtz, Mark  and
	Fineran, Benjamin  and
	Goin, Michael  and
	Alistarh, Dan},
	booktitle = {EMNLP},
	year = {2022},
}

@article{JointJournal,
	author={Motetti, Beatrice Alessandra and Risso, Matteo and Burrello, Alessio and Macii, Enrico and Poncino, Massimo and Pagliari, Daniele Jahier},
	journal={ IEEE Transactions on Computers },
	title={{Joint Pruning and Channel-Wise Mixed-Precision Quantization for Efficient Deep Neural Networks}},
	year={2024},
	volume={73},
	number={11},
	pages={2619-2633},
	publisher={IEEE Computer Society},
}

@inproceedings{PQK,
	title={PQK: Model Compression via Pruning, Quantization, and Knowledge Distillation},
	author={Kim, Jangho and Chang, Simyung and Kwak, Nojun},
	booktitle={Interspeech},
	year={2021}
}

@inproceedings{AccelFPGAs,
	author={Huang, Sitao and Pearson, Carl and Nagi, Rakesh and Xiong, Jinjun and Chen, Deming and Hwu, Wen-mei},
	booktitle={HPEC}, 
	title={Accelerating Sparse Deep Neural Networks on FPGAs}, 
	year={2019},
}

@article{InferenceSurvey,
	title={A survey of techniques for optimizing transformer inference},
	author={Chitty-Venkata, Krishna Teja and Mittal, Sparsh and Emani, Murali and Vishwanath, Venkatram and Somani, Arun K},
	journal={Journal of Systems Architecture},
	volume={144},
	pages={102990},
	year={2023},
	publisher={Elsevier}
}

@inproceedings{OPQ,
	title={Opq: Compressing deep neural networks with one-shot pruning-quantization},
	author={Hu, Peng and Peng, Xi and Zhu, Hongyuan and Aly, Mohamed M Sabry and Lin, Jie},
	booktitle={AAAI},
	year={2021}
}

@inproceedings{ZSQSurvey,
	title={Zero-shot Quantization: A Comprehensive Survey},
	author={Kim, Minjun and Choi, Jaehyeon and Lee, Jongkeun and Cho, Wonjin and Kang, U},
	booktitle={IJCAI},
	year={2025}
}

@inproceedings{McAuleySurvey,
	title={A survey on model compression and acceleration for pretrained language models},
	author={Xu, Canwen and McAuley, Julian},
	booktitle={AAAI},
	year={2023}
}

@article{MLCompSurvey,
	title={A comprehensive review of model compression techniques in machine learning},
	author={Dantas, Pierre Vilar and Sabino da Silva Jr, Waldir and Cordeiro, Lucas Carvalho and Carvalho, Celso Barbosa},
	journal={Applied Intelligence},
	volume={54},
	number={22},
	pages={11804--11844},
	year={2024},
	publisher={Springer}
}

@article{PPFSurvey,
	title={A survey of model compression techniques: Past, present, and future},
	author={Liu, Defu and Zhu, Yixiao and Liu, Zhe and Liu, Yi and Han, Changlin and Tian, Jinkai and Li, Ruihao and Yi, Wei},
	journal={Frontiers in Robotics and AI},
	volume={12},
	pages={1518965},
	year={2025},
	publisher={Frontiers Media SA}
}

@inproceedings{SliceGPT,
	title={SliceGPT: Compress Large Language Models by Deleting Rows and Columns},
	author={Ashkboos, Saleh and Croci, Maximilian L and do Nascimento, Marcelo Gennari and Hoefler, Torsten and Hensman, James},
	booktitle={ICLR},
	year={2024}
}

@inproceedings{SPRINT,
	title={Accurate sublayer pruning for large language models by exploiting latency and tunability information},
	author={Park, Seungcheol and Lee, Sojin and Kim, Jongjin and Lee, Jinsik and Jo, Hyunjik and Kang, U},
	booktitle={IJCAI},
	year={2025}
}

@inproceedings{UniQuanF,
	title={Unifying Uniform and Binary-coding Quantization for Accurate Compression of Large Language Models},
	author={Park, Seungcheol and Bae, Jeongin and Kwon, Beomseok and Kim, Minjun and Kim, Byeongwook and Kwon, Se Jung and Kang, U and Lee, Dongsoo},
	booktitle={ACL},
	year={2025},
}

@inproceedings{KCM,
	title={Gradient-free structured pruning with unlabeled data},
	author={Nova, Azade and Dai, Hanjun and Schuurmans, Dale},
	booktitle={ICML},
	year={2023},
}

@inproceedings{DuQuant,
	title={Duquant: Distributing outliers via dual transformation makes stronger quantized llms},
	author={Lin, Haokun and Xu, Haobo and Wu, Yichen and Cui, Jingzhi and Zhang, Yingtao and Mou, Linzhan and Song, Linqi and Sun, Zhenan and Wei, Ying},
	booktitle={NeurIPS},
	year={2024}
}

@inproceedings{QuipSharp,
	title = {QuIP\string#: Even Better LLM Quantization with Hadamard Incoherence and Lattice Codebooks},
	author = {Tseng, Albert and Chee, Jerry and Sun, Qingyao and Kuleshov, Volodymyr and De Sa, Christopher},
	booktitle = {ICML},
	year = {2024}
}

@inproceedings{SpinQuant,
	title={SpinQuant: LLM Quantization with Learned Rotations},
	author={Liu, Zechun and Zhao, Changsheng and Fedorov, Igor and Soran, Bilge and Choudhary, Dhruv and Krishnamoorthi, Raghuraman and Chandra, Vikas and Tian, Yuandong and Blankevoort, Tijmen},
	booktitle={ICLR},
	year={2025}
}

@inproceedings{LPViT,
	title={Lpvit: Low-power semi-structured pruning for vision transformers},
	author={Xu, Kaixin and Wang, Zhe and Chen, Chunyun and Geng, Xue and Lin, Jie and Yang, Xulei and Wu, Min and Li, Xiaoli and Lin, Weisi},
	booktitle={ECCV},
	year={2024},
}

@inproceedings{GanQ,
	title={{GANQ}: {GPU}-Adaptive Non-Uniform Quantization for Large Language Models},
	author={Pengxiang Zhao and Xiaoming Yuan},
	booktitle={ICML},
	year={2025},
}

@inproceedings{Wanda,
	title={A Simple and Effective Pruning Approach for Large Language Models},
	author={Mingjie Sun and Zhuang Liu and Anna Bair and J Zico Kolter},
	booktitle={ICLR},
	year={2024},
}

@inproceedings{OWQ,
	title={Owq: Outlier-aware weight quantization for efficient fine-tuning and inference of large language models},
	author={Lee, Changhun and Jin, Jungyu and Kim, Taesu and Kim, Hyungjun and Park, Eunhyeok},
	booktitle={AAAI},
	year={2024}
}

@inproceedings{BasisSharing,
	title={Basis Sharing: Cross-Layer Parameter Sharing for Large Language Model Compression},
	author={Jingcun Wang and Yu-Guang Chen and Ing-Chao Lin and Bing Li and Grace Li Zhang},
	booktitle={ICLR},
	year={2025},
}

@inproceedings{Roast,
	title={In defense of parameter sharing for model-compression},
	author={Aditya Desai and Anshumali Shrivastava},
	booktitle={ICLR},
	year={2024},
}

@article{LLaMA2,
	title={Llama 2: Open foundation and fine-tuned chat models},
	author={Touvron, Hugo and Martin, Louis and Stone, Kevin and Albert, Peter and Almahairi, Amjad and Babaei, Yasmine and Bashlykov, Nikolay and Batra, Soumya and Bhargava, Prajjwal and Bhosale, Shruti and others},
	journal={arXiv preprint arXiv:2307.09288},
	year={2023}
}

@article{LLaMA3,
	title={The llama 3 herd of models},
	author={Grattafiori, Aaron and Dubey, Abhimanyu and Jauhri, Abhinav and Pandey, Abhinav and Kadian, Abhishek and Al-Dahle, Ahmad and Letman, Aiesha and Mathur, Akhil and Schelten, Alan and Vaughan, Alex and others},
	journal={arXiv preprint arXiv:2407.21783},
	year={2024}
}

@inproceedings{Magnitude,
	title={Learning both weights and connections for efficient neural network},
	author={Han, Song and Pool, Jeff and Tran, John and Dally, William},
	booktitle={NeurIPS},
	year={2015}
}

@inproceedings{SAViT,
	title={{SAV}iT: Structure-Aware Vision Transformer Pruning via Collaborative Optimization},
	author={Zheng Chuanyang and Zheyang Li and Kai Zhang and Zhi Yang and Wenming Tan and Jun Xiao and Ye Ren and Shiliang Pu},
	booktitle={NeurIPS},
	year={2022},
}

@inproceedings{WikiText2,
	title={Pointer Sentinel Mixture Models},
	author={Stephen Merity and Caiming Xiong and James Bradbury and Richard Socher},
	booktitle={ICLR},
	year={2017},
}

@article{C4,
	title={Exploring the limits of transfer learning with a unified text-to-text transformer},
	author={Raffel, Colin and Shazeer, Noam and Roberts, Adam and Lee, Katherine and Narang, Sharan and Matena, Michael and Zhou, Yanqi and Li, Wei and Liu, Peter J},
	journal={Journal of machine learning research},
	volume={21},
	number={140},
	pages={1--67},
	year={2020}
}

@inproceedings{PRACTISE,
	title={Practical network acceleration with tiny sets},
	author={Wang, Guo-Hua and Wu, Jianxin},
	booktitle={CVPR},
	year={2023}
}

@inproceedings{N2UQ,
	title={Nonuniform-to-uniform quantization: Towards accurate quantization via generalized straight-through estimation},
	author={Liu, Zechun and Cheng, Kwang-Ting and Huang, Dong and Xing, Eric P and Shen, Zhiqiang},
	booktitle={CVPR},
	year={2022}
}

@Misc{accelerate,
	title =        {Accelerate: Training and inference at scale made simple, efficient and adaptable.},
	author =       {Sylvain Gugger and Lysandre Debut and Thomas Wolf and Philipp Schmid and Zachary Mueller and Sourab Mangrulkar and Marc Sun and Benjamin Bossan},
	howpublished = {\url{https://github.com/huggingface/accelerate}},
	year = {2022}
}

@inproceedings{Transformers,
	title = {Transformers: State-of-the-Art Natural Language Processing},
	author = {Thomas Wolf and Lysandre Debut and Victor Sanh and Julien Chaumond and Clement Delangue and Anthony Moi and Pierric Cistac and Tim Rault and Rémi Louf and Morgan Funtowicz and Joe Davison and Sam Shleifer and Patrick von Platen and Clara Ma and Yacine Jernite and Julien Plu and Canwen Xu and Teven Le Scao and Sylvain Gugger and Mariama Drame and Quentin Lhoest and Alexander M. Rush},
	booktitle = {EMNLP},
	year = {2020},
}

@inproceedings{HAWQ-V2,
	title={Hawq-v2: Hessian aware trace-weighted quantization of neural networks},
	author={Dong, Zhen and Yao, Zhewei and Arfeen, Daiyaan and Gholami, Amir and Mahoney, Michael W and Keutzer, Kurt},
	booktitle={NeurIPS},
	year={2020}
}

@article{ARC,
	title={Think you have solved question answering? try arc, the ai2 reasoning challenge},
	author={Clark, Peter and Cowhey, Isaac and Etzioni, Oren and Khot, Tushar and Sabharwal, Ashish and Schoenick, Carissa and Tafjord, Oyvind},
	journal={arXiv preprint arXiv:1803.05457},
	year={2018}
}

@inproceedings{HellaSwag,
	title={HellaSwag: Can a Machine Really Finish Your Sentence?},
	author={Zellers, Rowan and Holtzman, Ari and Bisk, Yonatan and Farhadi, Ali and Choi, Yejin},
	booktitle ={ACL},
	year={2019}
}

@inproceedings{LAMBADA,
	title={The LAMBADA dataset: Word prediction requiring a broad discourse context},
	author={Paperno, D and Kruszewski, G and Lazaridou, A and Pham, QN and Bernardi, Raffaella and Pezzelle, S and Baroni, M and Boleda, G and Fern{\'a}ndez, R},
	booktitle={ACL},
	year={2016},
}

@inproceedings{PIQA,
	title={Piqa: Reasoning about physical commonsense in natural language},
	author={Bisk, Yonatan and Zellers, Rowan and Gao, Jianfeng and Choi, Yejin and others},
	booktitle={AAAI},
	year={2020}
}

@article{Winogrande,
	title={Winogrande: An adversarial winograd schema challenge at scale},
	author={Sakaguchi, Keisuke and Bras, Ronan Le and Bhagavatula, Chandra and Choi, Yejin},
	journal={Communications of the ACM},
	volume={64},
	number={9},
	pages={99--106},
	year={2021},
}

@inproceedings{BERT,
	title={Bert: Pre-training of deep bidirectional transformers for language understanding},
	author={Devlin, Jacob and Chang, Ming-Wei and Lee, Kenton and Toutanova, Kristina},
	booktitle={NAACL},
	year={2019}
}

@article{Mistral,
	title={Mistral 7B}, 
	author={Albert Q. Jiang and Alexandre Sablayrolles and Arthur Mensch and Chris Bamford and Devendra Singh Chaplot and Diego de las Casas and Florian Bressand and others},
	year={2023},
	journal={arXiv preprint arXiv:2310.06825},
}

@inproceedings{STS-B,
	title={SemEval-2017 Task 1: Semantic Textual Similarity Multilingual and Crosslingual Focused Evaluation},
	author={Cer, Daniel and Diab, Mona and Agirre, Eneko and Lopez-Gazpio, I{\~n}igo and Specia, Lucia},
	booktitle={Eleventh International Workshop on Semantic Evaluation},
	year={2017}
}

@inproceedings{GLUE,
	title={{GLUE}: A Multi-Task Benchmark and Analysis Platform for Natural Language Understanding},
	author={Alex Wang and Amanpreet Singh and Julian Michael and Felix Hill and Omer Levy and Samuel R. Bowman},
	booktitle={ICLR},
	year={2019},
}

@misc{Mistral_Nemo,
	title={{Mistral NeMo: our new best small model}},
	author={{Mistral AI Team}},
	howpublished={\url{https://mistral.ai/news/mistral-nemo}},
	year={2024},
	month={July},
	note={Accessed: 2025}
}

@inproceedings{
	LongPPL,
	title={What is Wrong with Perplexity for Long-context Language Modeling?},
	author={Lizhe Fang and Yifei Wang and Zhaoyang Liu and Chenheng Zhang and Stefanie Jegelka and Jinyang Gao and Bolin Ding and Yisen Wang},
	booktitle={ICLR},
	year={2025},
}

@inproceedings{BeyondPPL,
	title = {Language Model Evaluation Beyond Perplexity},
	author = {Meister, Clara  and
	Cotterell, Ryan},
	booktitle = {ACL},
	year = {2021},
}

@inproceedings{LoRA,
	title={Lo{RA}: Low-Rank Adaptation of Large Language Models},
	author={Edward J Hu and yelong shen and Phillip Wallis and Zeyuan Allen-Zhu and Yuanzhi Li and Shean Wang and Lu Wang and Weizhu Chen},
	booktitle={ICLR},
	year={2022},
}

@Misc{PEFT,
	title =        {{PEFT}: State-of-the-art Parameter-Efficient Fine-Tuning methods},
	author =       {Sourab Mangrulkar and Sylvain Gugger and Lysandre Debut and Younes Belkada and Sayak Paul and Benjamin Bossan},
	howpublished = {\url{https://github.com/huggingface/peft}},
	year =         {2022}
}
\bibliographystyle{iclr2026_conference}

\appendix
\newpage

\section{Notation}
\label{app:sec:notation}
We summarize the frequently used notations in the paper as Table~\ref{app:tab:notation}.

\def\arraystretch{1.0}
\begin{table}[ht]
	\centering
	\caption{Frequently used notations.}
    \label{app:tab:notation}
		\begin{tabular}{cl}
		\hline
		\toprule
		\textbf{Symbol} & \textbf{Description} \\
		
		\midrule
		
		$\model$ 
		& A pre-trained model \\
		
		$\compmodel$ 
		& A compressed model \\
		
		$\comp{\cdot} \in \compset$ 
		& Compression method from a set $\compset$ \\
		
		$\comprate$
		& Compression ratio \\
				
		$\metric{\cdot}$ 
		& Performance metric \\
		
		$\weighti, \acti$ 
		& Weight and activation matrices of layer $\layeri$, respectively \\
		
		$\perm \in \permset$ 
		& Compression order from a set $\permset$ of all possible permutations \\
		
		\midrule
		
		$\prune{\cdot}, \quant{\cdot}$
		& Pruning and quantization methods, respectively \\
		
		$\pruneratio$ 
		& Pruning ratio ($\comprateprune = 1/(1-p)$) \\
		
		$\orgbits, \quantbits$
		& Original and quantized bit-widths, respectively \\
		
		$\erroru{\componly}{\cdot}$
		& Error induced by applying $\comp{\cdot}$ \\
		
		$\type \in \typeset{\model}$
		& Abstract data type from the set of all valid types in model $\model$  \\
		
		$\typegran{\componly}$ 
		& Granularity of $\comp{\cdot}$ \\
		
		$\unit \in \units{\model; \type}$
		& A unit of type $\type$ within the model $\model$ \\
		
		$\applies{\unit}{\componly}{\pi}$
		& Binary indicator of whether $\comp{\cdot}$ modifies unit $\unit$ under order $\pi$ \\
		
		\midrule

 		$\pg{\componeonly, \comptwoonly}$
		 &  Performance gap between $\compone{\cdot}$ and $\comptwo{\cdot}$ \\	 
		 
		 $\coa{\componeonly \rightarrow \comptwoonly}$ 
		 & Compression order advantage of $\componeonly \rightarrow \comptwoonly$ over $\comptwoonly \rightarrow \componeonly$ \\
		 
		 $\compeqrate_{\componly}$ 
		 & Compression Equivalent Ratio (CER) of $\comp{\cdot}$ \\
		 
		 $\interference{\model; \componeonly \rightarrow \comptwoonly}$ 
		 & Interference from $\compone{\cdot}$ to $\comptwo{\cdot}$ \\
		 		
		\bottomrule
		\hline
		\end{tabular}
\end{table}



\section{Details on Theoretical Analysis}
\label{app:sec:theory_detail}
	%
\begin{wrapfigure}[9]{R}{4cm}
	\vspace{-19.2mm}
	\centering
	\includegraphics[width=\linewidth]{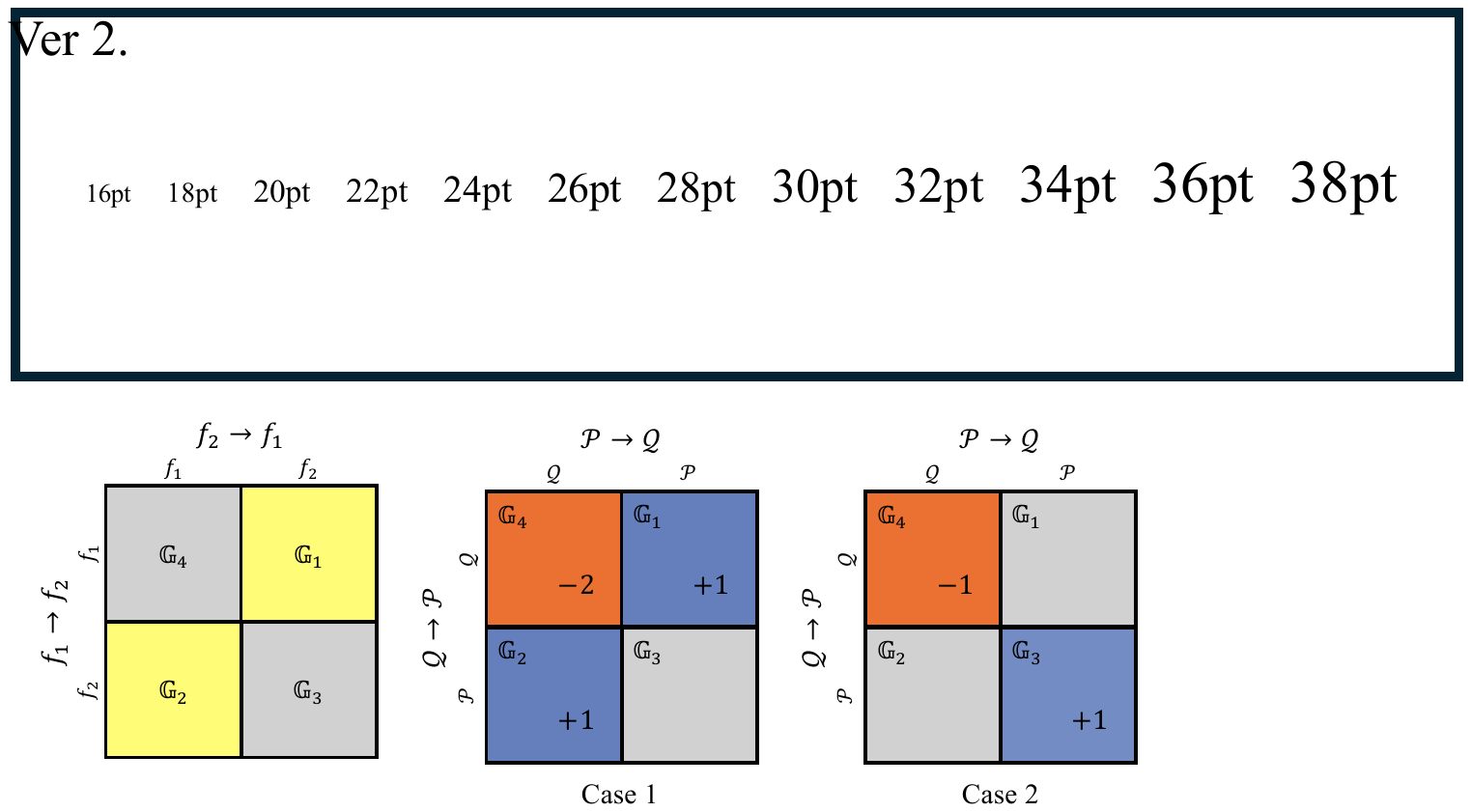}
	\vspace{-8mm}
	\caption{
		We partition all units $\unit$ into four disjoint groups.
		Only groups $\groupone$ and $\grouptwo$ influence $\coa{\componeonly \rightarrow \comptwoonly}$.
	}
	\label{app:fig:proof_case}
\end{wrapfigure}
%
We provide the detailed proofs for Theorems~\ref{thm:1} and~\ref{thm:2}, then formulate a generalized version of Hypothesis~\ref{hypothesis} applicable to a broader setting with multiple compression methods.

\subsection{Proof of Theorem~\ref{thm:1}}
\label{app:subsec:proof_thm1}

\begin{proof}
	Given two compression methods $\compone{\cdot}$ and $\comptwo{\cdot}$ with respective granularities $\typegran{\componeonly}$ and $\typegran{\comptwoonly}$, disjoint selectivity ensures that every unit is assigned exclusively to one method.
    Hence, every unit $\unit \in \units{\model; \lut(\typegran{\componeonly}, \typegran{\comptwoonly}) }$ is classified into one of four disjoint groups, $\groupone, \grouptwo, \groupthree,$ or $\groupfour$, according to its assigned method.
    Then,

    \begin{align*}
    	& \groupone = \{\unit ~|~ \applies{\unit}{\componeonly}{\comptwoonly \circ \componeonly} = 1, ~ \applies{\unit}{\componeonly}{\componeonly \circ \comptwoonly} = 0 \}, \\
    	& \grouptwo = \{\unit ~|~ \applies{\unit}{\componeonly}{\comptwoonly \circ \componeonly} = 0, ~ \applies{\unit}{\componeonly}{\componeonly \circ \comptwoonly} = 1 \}, \\
    	& \groupthree = \{\unit ~|~ \applies{\unit}{\componeonly}{\comptwoonly \circ \componeonly} = 0, ~ \applies{\unit}{\componeonly}{\componeonly \circ \comptwoonly} = 0 \}, \\
    	& \groupfour = \{\unit ~|~ \applies{\unit}{\componeonly}{\comptwoonly \circ \componeonly} = 1, ~ \applies{\unit}{\componeonly}{\componeonly \circ \comptwoonly} = 1 \},\\
    	& \groupone \cup \grouptwo \cup \groupthree \cup \groupfour = \units{\model; \lut(\typegran{\componeonly}, \typegran{\comptwoonly})},
    \end{align*}

    where $\units{\model; \type}$ represents the unit set of model $\model$ at granularity $\type$,
    and $\applies{\unit}{\componly}{\pi}$ records whether $\comp{\cdot}$ modifies $\unit$ under the ordering $\pi$ (1 if yes, 0 if no).
    Note that these four groups are mutually exclusive and collectively exhaustive.
    Also, $|\groupone| = |\grouptwo|$ since compression ratios $ \comprate_{\componeonly}$ and $ \comprate_{\comptwoonly}$ are identical regardless of the compression order.
    Figure~\ref{app:fig:proof_case} illustrates the four groups.

    Under Assumption~\ref{assumption:general} and the defined partitioning of groups, the compression order advantage $\coa{\cdot}$ is expressed in terms of unit-wise reconstruction errors $\erroru{\componly}{\uniti}$:
    \begin{align*}
    	& \coa{\componeonly \rightarrow \comptwoonly} \\
    	&= \metric{(\comptwoonly \circ \componeonly)(\phi)} - \metric{(\componeonly \circ \comptwoonly)(\phi)} \\
    	&= - \beta \big( \erroru{(\comptwoonly \circ \componeonly)}{\model} - \erroru{(\componeonly \circ \comptwoonly)}{\model} \big) \\    	
    	&= - \beta \Big(
    	\sum_{\uniti \in \groupone}{\|\erroru{\componeonly}{\uniti}\|^2_F}
    	+ \sum_{\uniti \in \grouptwo}{\|\erroru{\comptwoonly}{\uniti}\|^2_F}
    	+ \sum_{\uniti \in \groupthree}{\|\erroru{\comptwoonly}{\uniti}\|^2_F}
    	+ \sum_{\uniti \in \groupfour}{\|\erroru{\componeonly}{\uniti}\|^2_F} \\
    	&\quad\quad\quad\,\,\,\, - \sum_{\uniti \in \groupone}{\|\erroru{\comptwoonly}{\uniti}\|^2_F}
    	- \sum_{\uniti \in \grouptwo}{\|\erroru{\componeonly}{\uniti}\|^2_F}
    	- \sum_{\uniti \in \groupthree}{\|\erroru{\comptwoonly}{\uniti}\|^2_F}
    	- \sum_{\uniti \in \groupfour}{\|\erroru{\componeonly}{\uniti}\|^2_F}
    	\Big) \\
    	& = - \beta \Big(
    	\sum_{\uniti \in \groupone}{\|\erroru{\componeonly}{\uniti}\|^2_F}
    	+ \sum_{\uniti \in \grouptwo}{\|\erroru{\comptwoonly}{\uniti}\|^2_F}
    	- \sum_{\uniti \in \groupone}{\|\erroru{\comptwoonly}{\uniti}\|^2_F}
    	- \sum_{\uniti \in \grouptwo}{\|\erroru{\componeonly}{\uniti}\|^2_F}
    	\Big) \\
    	& = \beta \big(
    	\sum_{\uniti \in \grouptwo} {g(\uniti)}
    	- \sum_{\uniti \in \groupone} {g(\uniti)}
    	\big),
    \end{align*}
	where error difference $g(\uniti) = \big\| \erroru{\componeonly}{\uniti} \big\|_F^2 - \big\| \erroru{\comptwoonly}{\uniti} \big\|_F^2$.
	Note that $\groupthree$ and $\groupfour$ are discarded since their effect remains unchanged irrespective of the compression order.

\end{proof}

\smallsection{Case study on pruning and quantization}
To support intuition, we provide a case study on pruning and quantization, which constitute the core scenario of our work.
As described in the main text, disjoint selectivity holds only when the granularity $\typegran{\pruneonly}$ of pruning $\prune{\cdot}$ is greater than or equal to the granularity $\typegran{\quantonly}$ of quantization $\quant{\cdot}$.
We analyze this at the layer level: let $\weighti$ and $\acti$ denote the weight and activation of a layer $\layeri \in \layers$ in the model $\model$.
Note that the error $\erroru{\componly}{\weighti, \acti} = \comp{\weighti} \comp{\acti} - \weighti \acti$ for a compression method $\comp{\cdot}$, as described in Section~\ref{sec:prelim}.

We partition the layers $\layers$ into four disjoint groups $\groupone$, $\grouptwo$, $\groupthree$, and $\groupfour$ based on their pruning status:
\begin{align*}
	& \groupone = \{\unit ~|~ \applies{\unit}{\quantonly}{\pruneonly \circ \quantonly} = 1, ~ \applies{\unit}{\quantonly}{\quantonly \circ \pruneonly} = 0 \}, \\
	& \grouptwo = \{\unit ~|~ \applies{\unit}{\quantonly}{\pruneonly \circ \quantonly} = 0, ~ \applies{\unit}{\quantonly}{\quantonly \circ \pruneonly} = 1 \}, \\
	& \groupthree = \{\unit ~|~ \applies{\unit}{\quantonly}{\pruneonly \circ \quantonly} = 0, ~ \applies{\unit}{\quantonly}{\quantonly \circ \pruneonly} = 0 \}, \\
	& \groupfour = \{\unit ~|~ \applies{\unit}{\quantonly}{\pruneonly \circ \quantonly} = 1, ~ \applies{\unit}{\quantonly}{\quantonly \circ \pruneonly} = 1 \},\\
	& \groupone \cup \grouptwo \cup \groupthree \cup \groupfour = \layers,
\end{align*}
 where $\applies{\unit}{\componly}{\pi}$ records whether $\comp{\cdot}$ modifies $\unit$ under the ordering $\pi$ (1 if yes, 0 if no).
In the pruning–quantization setting, the partition is directly determined by whether each layer is pruned in the final model.
Pruning behaves as an absorbing operator: pruning overrides any modification introduced by quantization.
Therefore, the partition above reduces to grouping layers according to whether they are pruned under $\ptq$ or $\qtp$, yielding the pruning-status-based formulation below:
%
%
\begin{align*}
	& \groupone = \prunedlayers_{\ptq} \setminus \prunedlayers_{\qtp}, \\
	& \grouptwo = \prunedlayers_{\qtp} \setminus \prunedlayers_{\ptq}, \\
	& \groupthree = \prunedlayers_{\ptq} \cap \prunedlayers_{\qtp}, \\
	& \groupfour = \layers \setminus (\prunedlayers_{\ptq} \cup \prunedlayers_{\qtp}), \\
	& \groupone \cup \grouptwo \cup \groupthree \cup \groupfour = \layers,
\end{align*}
where $\prunedlayers_{\componly}$ denote the sets of pruned layers when applying $\comp{\cdot}$.

Then, the quantization-first advantage $\coa{\quantonly \rightarrow \pruneonly}$ is estimated as follows:
\begin{align*}
	& \coa{\quantonly \rightarrow \pruneonly}
	= \metric{(\qtp)(\phi)} - \metric{(\ptq)(\phi)}
	= - \beta \big( \erroru{\qtp}{\model} - \erroru{\ptq}{\model} \big) \\
	&= - \beta \Big( \sum_{\layeri \in \layers}{
		\big\|\errorqtp{\weighti, \acti}\big\|_F^2 - \big\|\errorptq{\weighti, \acti}\big\|_F^2} \Big)\\
	&= - \beta \bigg( \sum_{\layeri \in \groupone} { \Big\{
		{\big\| \errorq{\weighti, \acti} \big\|_F^2 - \big\| -\weighti \acti \big\|_F^2} \Big\}}
	+ \sum_{\layeri \in \grouptwo} { \Big\{
		\big\| -\weighti \acti \big\|_F^2 - \big\| \errorq{\weighti, \acti} \big\|_F^2 \Big\}} \bigg) \\
	&= \beta \bigg( \sum_{\layeri \in \grouptwo} {g(\weighti, \acti)} - \sum_{\layeri \in \groupone} {g(\weighti, \acti)} \bigg),
\end{align*}
where $ g(\weighti, \acti)
= \big\| \errorq{\weighti, \acti} \big\|_F^2 - \big\| -\weighti \acti \big\|_F^2$.
This expression holds as for any layer $\layeri \in \layers$, the pruning operator and its associated error are defined as follows:
\begin{equation*}
	\prune{\weighti} \prune{\acti} =
	\begin{cases}
		\mat{0} & \text{if pruned} \\
		\weighti \acti & \text{otherwise}
	\end{cases},
	\\ \quad
	\errorp{\weighti, \acti} =
	\begin{cases}
		-\weighti \acti & \text{if pruned} \\
		\mat{0} & \text{otherwise}
	\end{cases}.
\end{equation*}

\subsection{Proof of Theorem~\ref{thm:2}}
\label{app:subsec:proof_thm2}
\begin{proof}
	As $\coa{\quantonly \rightarrow \pruneonly}$ and $\pg{\cdot}$ (or $\compeqrateprune - \compratequant$) are functions of the compression ratio $\compratequant$ we analyze their behavior separately.
	Without loss of generality, we consider only the case where  $\compratequant$ changes in the direction of increasing $\compeqrateprune - \compratequant$, i.e., decreasing $\compratequant$.
	
	Under Assumption~\ref{assumption:pandq}, which assumes well-designed quantization, decreasing $\compratequant$ preserves the expected value of the quantized outputs while decreasing their standard deviation.
    As pruning intensity is held constant, the variation across compression orders is attributed solely to the severity of quantization.
    Lower quantization ratio (i.e., smaller standard deviation) decreases the chance that units behave differently across orders, which can only decrease or preserve the value of $|\groupone| = |\grouptwo|$, but never increase it.
    We analyze the two possible cases as follows.

    \begin{itemize}[leftmargin=3.4mm, itemsep=0mm]
    	\item \textbf{Case 1: Number of layers affected by order decreases.}
    	Although more than one layer may change simultaneously, any such change can be decomposed into a sequence in which layers are added one by one; thus it suffices to analyze the case where exactly one layer is added.
		Let $\layeri$ and $\layerj$ denote the layers moving from $\groupone$ and $\grouptwo$ respectively.
		Such a transition occurs in a budget-preserving manner: one layer from $\groupone$ and one from $\grouptwo$ are jointly reallocated, with one moving to $\groupthree$ and the other to $\groupfour$, so that the number of pruned layers remains fixed.
		As only $\groupone$ and $\grouptwo$ contribute to $\coa{\quantonly \rightarrow \pruneonly}$, removing $\layeri \in \groupone$ and $\layerj \in \grouptwo$ from these groups changes the value by $-\beta \big( g(\layerj) - g(\layeri) \big)$ (see Theorem~\ref{thm:1}).
		Therefore, to show that $\coa{\quantonly \rightarrow \pruneonly}$ does not decrease, it suffices to prove that
		$g(\layerj) - g(\layeri) \leq 0.$
		Expanding the definition of $g(\cdot)$, we obtain
		\[
		g(\layerj) - g(\layeri)
		= \big(\|\errorq{\weightj, \actj}\|_F^2 - \|\errorq{\weighti, \acti}\|_F^2\big) - \big(\|-\weightj \actj\|_F^2 - \|-\weighti \acti\|_F^2\big).
		\]
		Under Assumptions~\ref{assumption:general} and \ref{assumption:pandq}, the second term in parentheses is positive, i.e., $\|-\weightj \actj\|_F^2 - \|-\weighti \acti\|_F^2 > 0$.
		This is because under pruning alone, $\layeri \in \groupone$ is pruned while $\layerj \in \grouptwo$ is not.
		Under the well-designed pruning assumption, which minimizes performance drop, this is equivalent to minimizing the increased error.
		Therefore, the pruning error $ |-\weighti \acti|_F^2$ for pruned $\layeri$ is less than or equal to $ |-\weightj \actj|_F^2$ for unpruned $\layerj$, making the term positive.
		
		Given the assumption of well-designed quantization in Assumption~\ref{assumption:pandq}, the remaining first term, which denotes the difference in quantization errors, is negligible compared to the pruning-related component.
		This is because the quantization error at each layer is modeled as zero-mean noise with small variance, whereas the pruning error term $\|-\weighti \acti\|_F^2$ (or $\|-\weightj \actj\|_F^2$) corresponds directly to the magnitude of the pruned responses.
		Thus, the difference $\|\errorq{\weightj, \actj}\|_F^2 - \|\errorq{\weighti, \acti}\|_F^2$ remains uniformly small compared to $\|-\weightj \actj\|_F^2 - \|-\weighti \acti\|_F^2$, so the pruning-induced gap dominates $g(\layerj) - g(\layeri)$.
		Consequently, we get
		\[
		g(\layerj) - g(\layeri)
		\approx - \big(\|-\weightj \actj\|_F^2 - \|-\weighti \acti\|_F^2\big) < 0.
		\]
    	Overall, the decrease in the number of order-dependent layers leads to the increase of the order advantage $\coa{\quantonly \rightarrow \pruneonly}$.
    	
    	\item \textbf{Case 2: Number of layers affected by order remains unchanged.}
    	As the loss-contributing groups $\groupone$ and $\grouptwo$ do not change, the compression order advantage $\coa{\quantonly \rightarrow \pruneonly}$ remains unaffected.

    \end{itemize}
	
    Therefore, in all cases where $\compratequant$ decreases, the value of $\coa{\quantonly \rightarrow \pruneonly}$ does not decrease.
	In conclusion, monotonicity between $\coa{\quantonly \rightarrow \pruneonly}$ and $\pg{\cdot}$ holds under fixed $\comprateprune$.
\end{proof}

\subsection{Generalization to multiple methods}
\label{app:subsec:theory_generalize}
We formulate Hypothesis~\ref{hypothesis} with two compression methods $\compone{\cdot}$ and $\comptwo{\cdot}$.
This is because if the hypothesis holds for any pair of methods, it can be generalized to more than two methods.

Following the setup in Problem~\ref{problem}, suppose we sequentially apply a set $\compset = {\compone{\cdot}, \comptwo{\cdot}, \cdots, \compn{\cdot}}$ of compression methods to a pre-trained model $\model$.
Then, any pair $(\perm_1, \perm_2)$ of permutations from the set $\permset= \{ \perm: \compset \rightarrow \compset ~|~ \perm \text{ is bijective}\}$ of all permutations can be converted into one another via a sequence of adjacent transpositions.
This is because the adjacent transpositions generate the full symmetric group, allowing any permutation to be constructed from another.
Thus, under Hypothesis~\ref{hypothesis}, our original claim shown in Figure~\ref{fig:hypothesis} holds, that applying stronger permutations later leads to better performance of the compressed model. 

\section{Experimental Setup}
\label{app:sec:setup}

\def\arraystretch{0.8}
\begin{table}[t]
	\centering
	\caption{Baseline methods covered in our experiments across different settings.}
	\label{app:tab:baselines}
	\setlength{\tabcolsep}{10pt}
	\begin{tabular}{cccc}
		\hline
		\toprule
		\textbf{Compression} &
		\textbf{Modality} &  \textbf{Target Models} & \textbf{Baseline Methods} \\
		\midrule
		\multirow{4}[11]{*}{\makecell{Pruning \\ $\prune{\cdot}$}} & \multirow{2}[7]{*}{\makecell{Language \\ models}} & \makecell{Decoder-only \\ models} & \makecell{SparseGPT~\citep{SparseGPT}, \\ Wanda~\citep{Wanda}, \\ SLEB~\citep{SLEB}} \\
		\cmidrule{3-4}
		& & \makecell{Encoder-based \\ models} &
		\makecell{K-prune~\citep{KPrune}} \\
		\cmidrule{2-4}
		& \multirow{2}[3]{*}{\makecell{Vision \\ models}\vspace{1mm}} & \makecell{CNNs} & \makecell{PRACTISE~\citep{PRACTISE}} \\
		\cmidrule{3-4}
		& & \makecell{ViTs} & \makecell{SAViT~\citep{SAViT}} \\
		\midrule
		\multirow{4}[11]{*}{\makecell{Quantization \\ $\quant{\cdot}$}} & \multirow{2}[7]{*}{\makecell{Language \\ models}} & \makecell{Decoder-only \\ models} & \makecell{RTN~\citep{RTN}, \\ OPTQ~\citep{OPTQ}, \\ QuaRoT~\citep{QuaRot}} \\
		\cmidrule{3-4}
		& & \makecell{Encoder-based \\ models} &
		\makecell{UniQuanF~\citep{UniQuanF}} \\
		\cmidrule{2-4}
		& \multirow{2}[3]{*}{\makecell{Vision \\ models}\vspace{1mm}} & \makecell{CNNs} & \makecell{N2UQ~\citep{N2UQ}} \\
		\cmidrule{3-4}
		& & \makecell{ViTs} & \makecell{RepQ-ViT~\citep{RepQViT}} \\
		\midrule
		\makecell{Parameter \\ Efficient \\ Fine-tuning} & \makecell{Language \\ models} & \makecell{Decoder-only \\ models} & LoRA~\citep{LoRA}\\
		\midrule		
		\makecell{Parameter \\ Sharing} & \makecell{Language \\ models} & \makecell{Decoder-only \\ models} & Basis Sharing~\citep{BasisSharing}\\
		\midrule
		\makecell{Mixed-precision \\ quantization} & \makecell{Vision \\ models} & CNNs & HAWQ-V2~\citep{HAWQ-V2} \\
		\bottomrule
		\hline
	\end{tabular}
	\vspace{-2mm}
\end{table}


We describe the details on the experimental setup, including models, datasets, baselines, evaluation protocol, and implementation.

\smallsection{Models}
We evaluate representative models across modalities, including LLaMA 2 (7B, 13B)~\citep{LLaMA2}, LLaMA 3 8B~\citep{LLaMA3}, Mistral 7B~\citep{Mistral}, Mistral Nemo-12B~\citep{Mistral_Nemo}, and BERT~\citep{BERT} for language, and ResNet-18~\citep{ResNet} and DeiT-Base~\citep{DeiT} for vision.

\smallsection{Datasets}
We evaluate decoder-only language models on WikiText-2~\citep{WikiText2} and C4~\citep{C4} datasets
for perplexity, and on five commonsense reasoning tasks—ARC~\citep{ARC}, HellaSwag~\citep{HellaSwag}, LAMBADA~\citep{LAMBADA}, PIQA~\citep{PIQA}, and Winogrande~\citep{Winogrande}.
For encoder-based models, we evaluate the performance using Spearman's rank correlation coefficient on the STS-B dataset.
For vision models, we report classification accuracy on ImageNet (ILSVRC 2012)~\citep{ImageNet} dataset.

\smallsection{Baseline Methods}
We validate our hypothesis across a total of sixteen pairs of compression methods by incorporating six pruning methods, six quantization methods, and one mixed-precision quantization method.
Table~\ref{app:tab:baselines} provides an overview of baseline methods categorized by target models and modalities.
Refer to the original papers for further details.

\smallsection{Evaluation Protocol}
The calibration dataset consists of a single batch with 128 samples drawn from the same dataset used for perplexity evaluation.
We set the batch size to 16 for perplexity evaluation and to 128 for commonsense reasoning tasks.
All quantization methods apply the same bit-width to weights, activations, and KV-cache, with clipping applied during weight quantization.
Both models are evaluated without fine-tuning using a batch size of 128.
Metrics are reported as the average of five repeated runs, each computed with four-digit precision.
We plot the relative values for visualization.

\smallsection{Implementation and Machine}
Our implementations are written in Python and rely on PyTorch, Transformers, Accelerate, and TorchVision libraries.
For all baseline methods, we reproduce the results based on their open-source code and hyperparameter configurations.
All of our experiments are done at a workstation with Intel Xeon Gold 6338 and NVIDIA A100 80GB.

\smallsection{Parameter Efficient Fine-tuning Experiment}
We adopt LoRA~\citep{LoRA} on top of SparseGPT~\citep{SparseGPT} and RTN~\citep{RTN} to fine-tune the compressed model.
The target model is LLaMA 3 8B~\citep{LLaMA3}, where fine-tuning is processed after quantization.
We select WikiText-2 as the calibration dataset and train for a total of 2 epochs.
We follow Basis Sharing~\citep{BasisSharing} for training details of the low-rank adapter, while exploiting the PEFT~\citep{PEFT} library.

\smallsection{Parameter Sharing Experiment}
We evaluate the performance of LLaMA 2 7B model when applying Basis Sharing~\citep{BasisSharing} and magnitude-based pruning~\citep{Magnitude}.
We follow Basis Sharing for hyperparameters regarding parameter sharing, where the group size is 128.
QuaRot~\citep{QuaRot} is selected as the quantization baseline for calculating compression equivalent bits.
No additional fine-tuning is applied under this setting.

\smallsection{Mixed-precision Quantization Experiment}
We base our method on HAWQ-V2~\citep{HAWQ-V2}, but allocate bit-widths iteratively rather than in a single shot following LampQ~\citep{LampQ}.
At each iteration, we search per-layer bit-widths from a range of [2, 3, 4, 5, 6, 7, 8] and train for 5 epochs.
All other experimental settings, including hyperparameters and quantization techniques, are aligned with the original paper.
We run all MPQ experiments on a workstation with Intel Xeon Silver 4310 and NVIDIA RTX 4090. 

\section{Further Discussion and Experiments}
\label{app:sec:discussion}
We present results from extended experiments, and offer further discussion and remarks on our work.

\subsection{Experiments on Diverse LLMs}
\label{app:subsec:llm_results}
Our experiments on decoder-only models are limited to LLaMA herd models (LLaMA 2~\citep{LLaMA2} and LLaMA 3~\citep{LLaMA3} models), which may not fully reflect broader generality.
To address this, we conduct additional experiments on models from the Mistral herd.
Figure~\ref{app:fig:mistral} presents the results of applying SparseGPT ($\prune{\cdot}$) and QuaRot ($\quant{\cdot}$) to Mistral 7B~\citep{Mistral} and Mistral Nemo 12B~\citep{Mistral_Nemo}.
We have two observations from the result.
First, the compression-order trend aligns well with the hypothesis across Mistral-based models.
The result serves as additional evidence confirming the hypothesis in decoder-only language models.
Second, comparing models within the same herd (also refer to Figure~\ref{fig:llm_result}), we find that smaller models exhibit greater variation in compression-order advantage for identical CER differences.
This may be because low-bit quantization (or stronger quantization) causes greater degradation in smaller models, thereby intensifying the observed differences.

\begin{figure}[t]
	\centering
	\includegraphics[width=0.9\linewidth]{main_result_legend}\\
	\vspace{2mm}
	\hspace{0.01\linewidth}
	\begin{subfigure}[t]{0.34\linewidth}
		\centering
		\includegraphics[width=\linewidth]{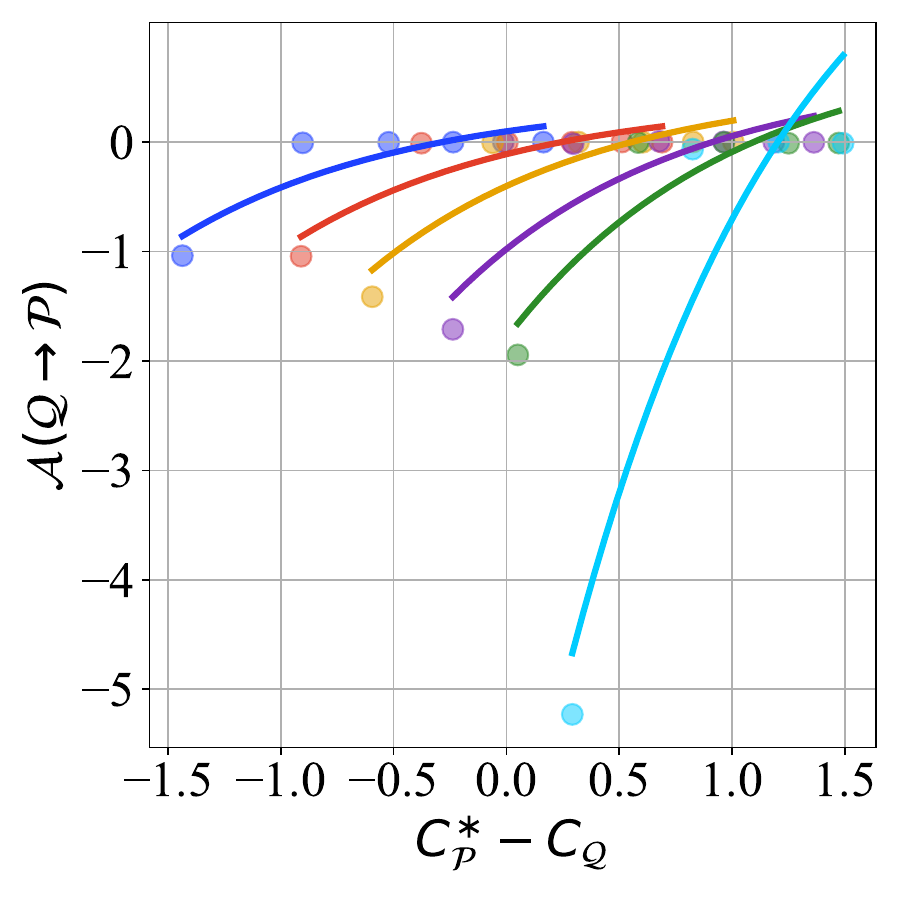}
		\caption{Mistral 7B}
		\label{app:fig:mistral:7b}
	\end{subfigure}
	\hspace{15mm}%
	\begin{subfigure}[t]{0.34\linewidth}
		\centering
		\includegraphics[width=\linewidth]{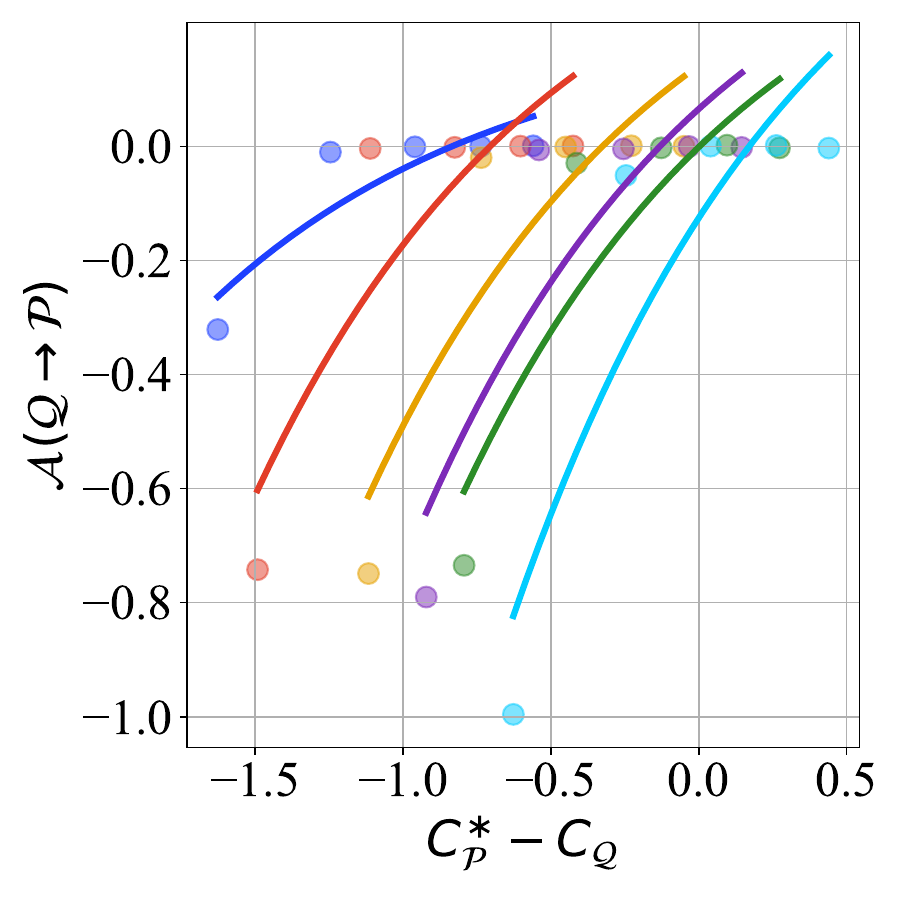}
		\caption{Mistral Nemo 12B}
		\label{app:fig:mistral:12b}
	\end{subfigure}
	\vspace{-1mm}
	\caption{
		The compression order advantage $\coa{\quantonly \rightarrow \pruneonly}$ increases monotonically with the CER difference $\compeqrateprune - \compratequant$ also for Mistral herd models.
		See Appendix~\ref{app:subsec:llm_results} for details.
	}
	\label{app:fig:mistral}
\end{figure}

\subsection{Analysis on Encoder-based Models}
\label{app:subsec:encoder_based}
%
%
\begin{wrapfigure}[10]{R}{4.8cm}
	\vspace{-14mm}
	\centering
	\includegraphics[width=0.9\linewidth]{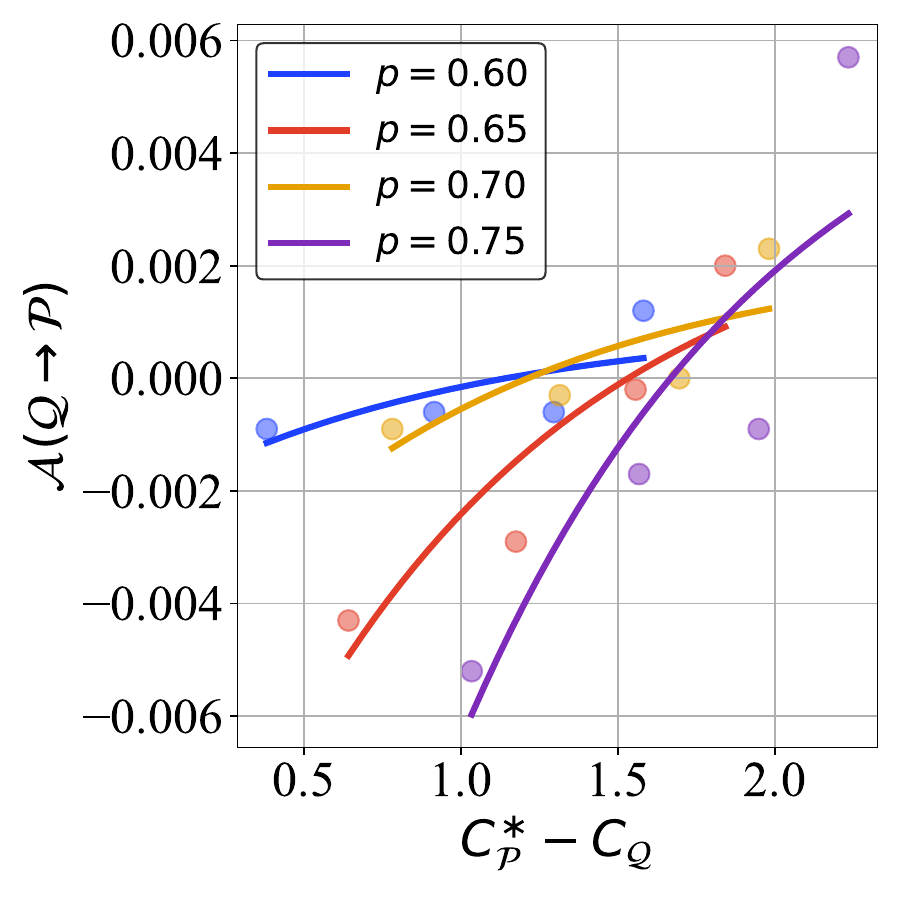}
	\vspace{-4mm}
	\caption{
		The hypothesis holds for encoder-based language models.
		See Appendix~\ref{app:subsec:encoder_based} for details.
	}
	\label{app:fig:encoder_based}
\end{wrapfigure}
%
%
Beyond decoder-only LLMs, we extend our analysis to encoder-based language models to validate the generality of our hypothesis.
Figure~\ref{app:fig:encoder_based} presents the performance of a BERT~\citep{BERT} model under K-prune~\citep{KPrune} ($\prune{\cdot}$) and UniQuanF~\citep{UniQuanF} ($\quant{\cdot}$).
We adopt Spearman correlation as the performance metric, measured on STS-B dataset~\citep{STS-B} from GLUE~\citep{GLUE} benchmark.
We observe a monotonic increase along both axes of $\coa{\quantonly \rightarrow \pruneonly}$ and $\compeqrateprune - \compratequant$, confirming that our hypothesis holds.
%


\begin{figure}[htbp]
	\centering
	\includegraphics[width=0.9\linewidth]{main_result_legend}\\
	\vspace{5mm}
	\hspace{0.01\linewidth}
	\begin{subfigure}[t]{0.32\linewidth}
		\centering
		\includegraphics[width=\linewidth]{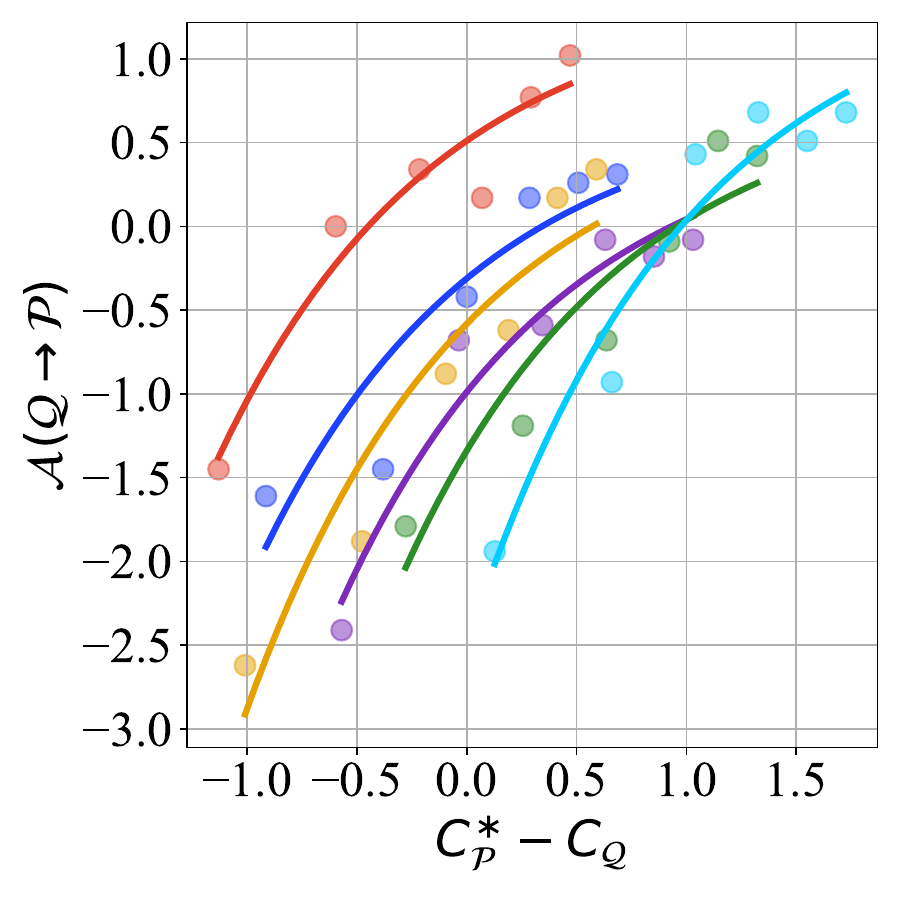}
		\caption{ARC}
		\label{app:fig:csr:arc_challenge}
	\end{subfigure}
	\hspace{15mm}%
	\begin{subfigure}[t]{0.32\linewidth}
		\centering
		\includegraphics[width=\linewidth]{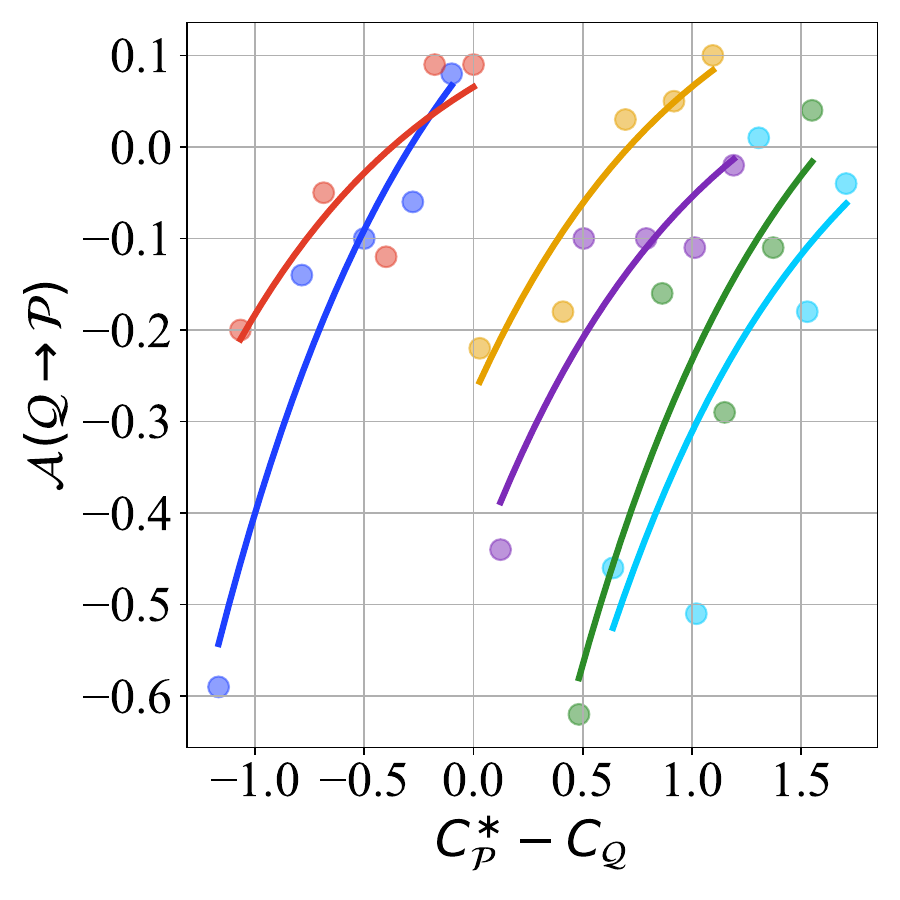}
		\caption{HellaSwag}
		\label{app:fig:csr:hellaswag}
		\vspace{5mm}
	\end{subfigure}
	%
	%
	\begin{subfigure}[t]{0.32\linewidth}
		\centering
		\includegraphics[width=\linewidth]{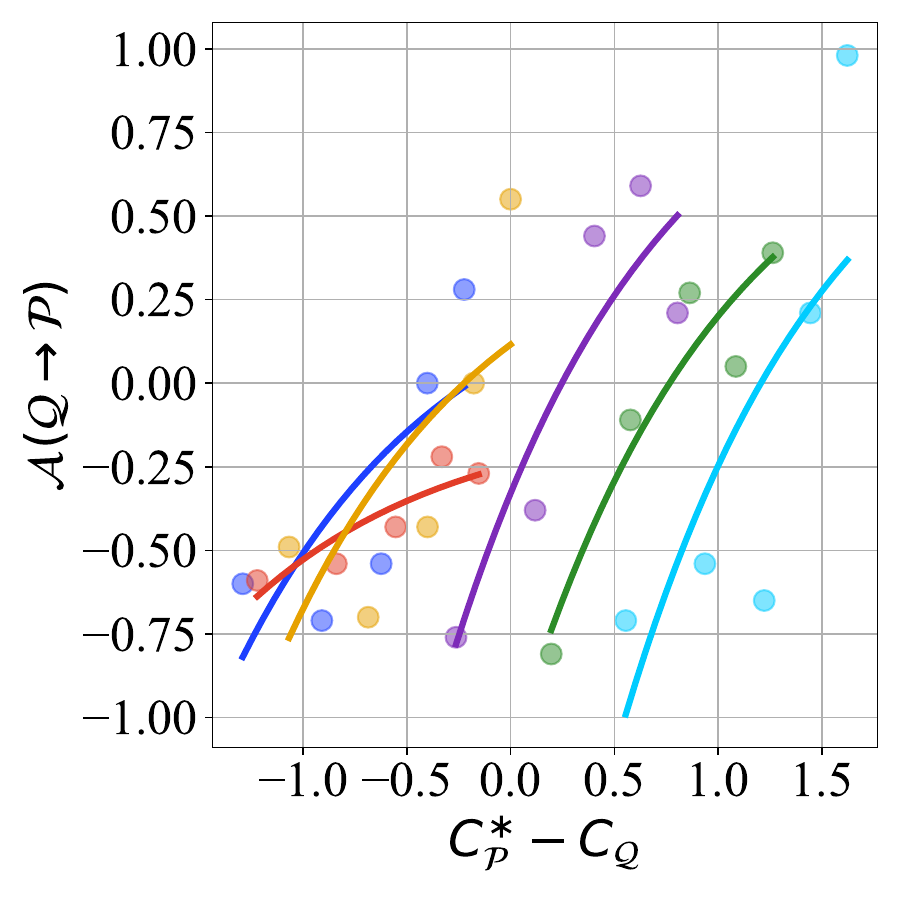}
		\caption{PIQA}
		\label{app:fig:csr:piqa}
	\end{subfigure}
	\hspace{15mm}%
	\begin{subfigure}[t]{0.32\linewidth}
		\centering
		\includegraphics[width=\linewidth]{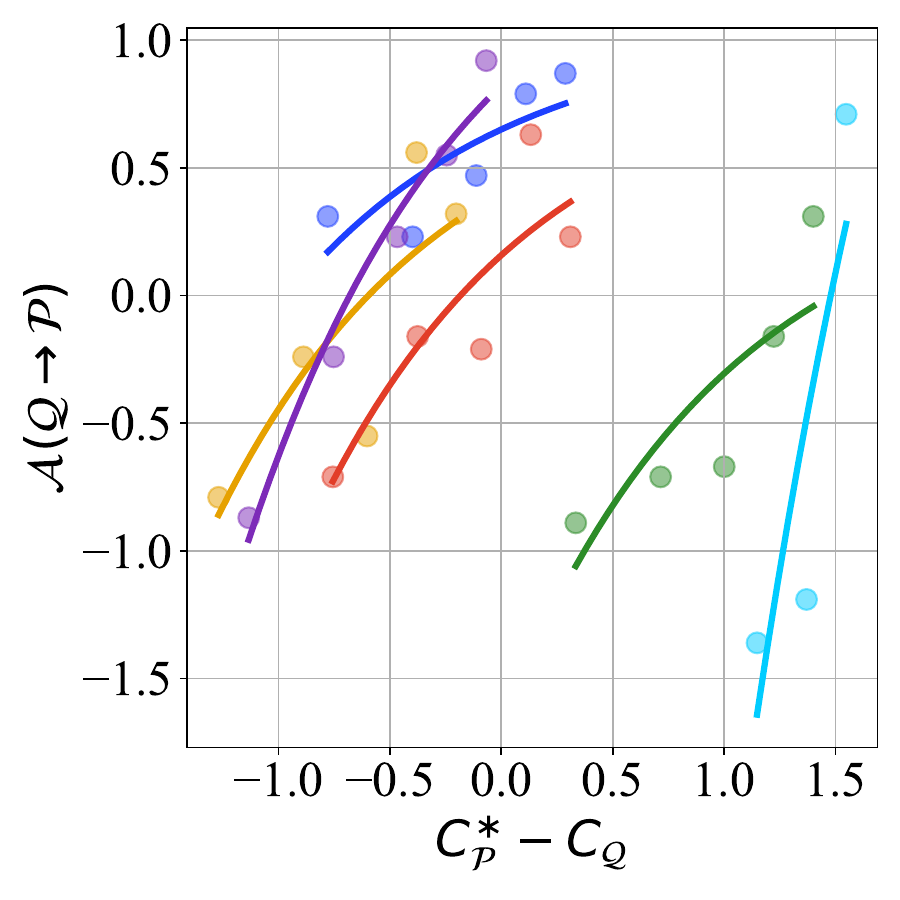}
		\caption{Winogrande}
		\label{app:fig:csr:winogrande}
		\vspace{5mm}
	\end{subfigure}
	%
	%
	\begin{subfigure}[t]{0.32\linewidth}
		\centering
		\includegraphics[width=\linewidth]{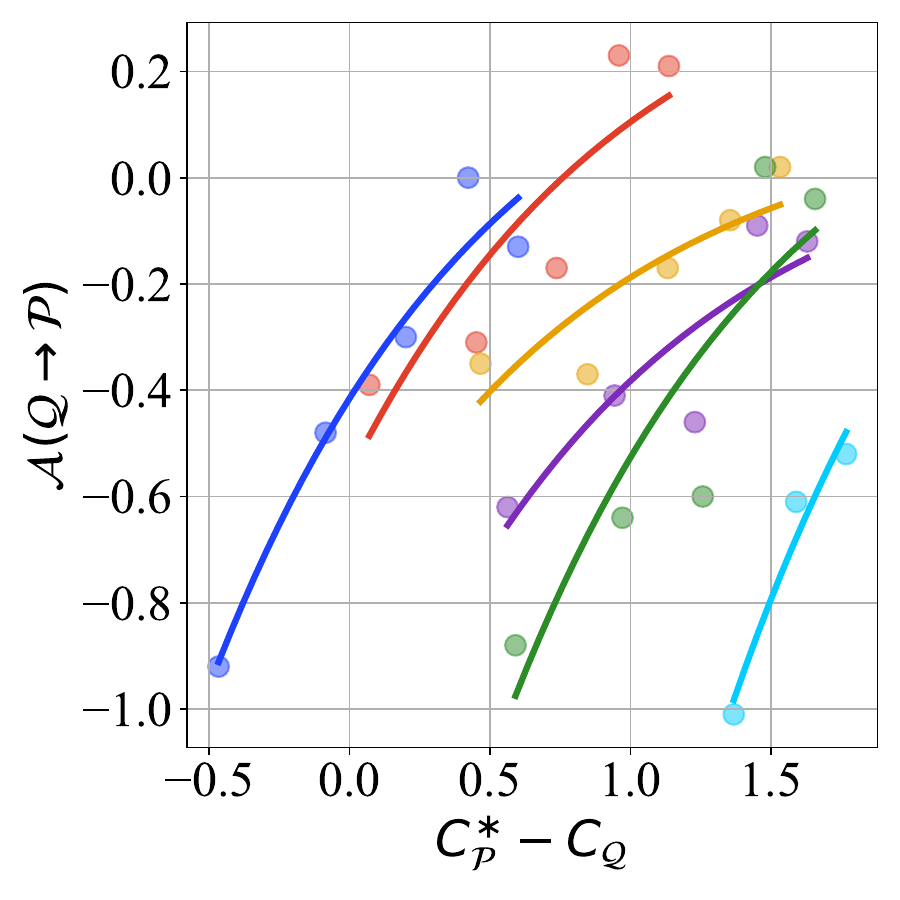}
		\caption{LAMBADA}
		\label{app:fig:csr:lambada}
	\end{subfigure}
	\hspace{15mm}%
	\begin{subfigure}[t]{0.32\linewidth}
		\centering
		\includegraphics[width=\linewidth]{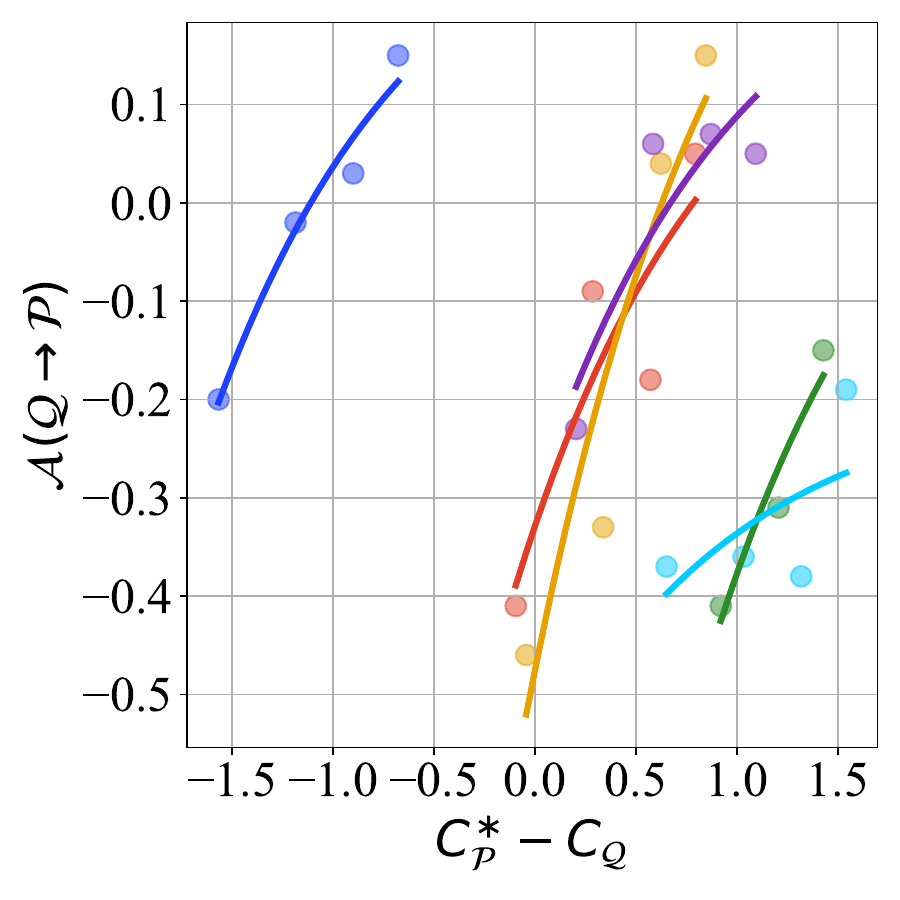}
		\caption{Average}
		\label{app:fig:csr:average}
		\vspace{5mm}
	\end{subfigure}
	%
	%
	\caption{
		Commonsense reasoning task performance of a LLaMA 3 8B model for SparseGPT and QuaRot.
		See Appendix~\ref{app:subsec:csr} for details.
	}
	\label{app:fig:csr}
\end{figure}

\subsection{Commonsense Reasoning Performance}
\label{app:subsec:csr}
Although the negative of perplexity serves as an intuitive and efficient metric $\metric{\cdot}$ for evaluating language models, prior studies~\citep{BeyondPPL, LongPPL} suggest it does not always correlate with real-world performance.
We thus investigate the performance of LLaMA 3 8B model across five commonsense reasoning tasks in Figure~\ref{app:fig:csr}, when applying SparseGPT ($\prune{\cdot}$) and QuaRot ($\quant{\cdot}$).
Results affirm the generality and metric-agnostic nature of our framework, as the hypothesis holds across these tasks.


\begin{table}[t]
	\centering
	\setlength{\tabcolsep}{6.2pt}
	\caption{WikiText-2 perplexity comparison of a LLaMA 3 8B model pruned by SLEB~\citep{SLEB} and SparseGPT~\citep{SparseGPT} under varying pruning ratios, with and without rotation following QuaRot~\citep{QuaRot}.
		See Section~\ref{app:subsec:rotation_pruning} for details.}
	\label{app:tab:rotation_pruning}	
	\renewcommand{\arraystretch}{1}
	\begin{tabular}{c c c c c c c}
		\hline
		\toprule
		\multirow{2}[3]{*}{\textbf{Pruning Ratio}} & \multicolumn{3}{c}{\textbf{SLEB}} & \multicolumn{3}{c}{\textbf{SparseGPT}} \\
		\cmidrule(lr){2-4} \cmidrule(lr){5-7}
		& No rotation & Rotation & \textbf{Difference} & No rotation & Rotation & \textbf{Difference} \\
		\midrule
		Original & \multicolumn{6}{c}{6.137} \\
		\midrule
		0.05    & 6.857   & 6.871   & \textbf{0.014 }
		& 6.140 & 6.154   & \textbf{0.014 }  \\
		0.1      & 8.792   & 8.828   & \textbf{0.036} &
		6.159 & 6.205   & \textbf{0.046} \\
		0.15    & 12.603 & 12.615  & \textbf{0.012} &
		6.213 & 6.352   & \textbf{0.139}  \\
		0.2      & 25.289 & 25.295 & \textbf{0.006} &
		6.330 & 6.629   & \textbf{0.299} \\
		0.25    & 51.212 & 51.560 & \textbf{0.348} &
		6.546 & 7.250   & \textbf{0.704}  \\
		0.3      & 61.502 & 61.901 & \textbf{0.399} &
		6.894 & 8.504   & \textbf{1.610}  \\
		0.35    & 65.997 & 66.234 & \textbf{0.237} &
		7.474 & 20.842 & \textbf{13.368} \\
		0.4      & 92.848 & 93.260 & \textbf{0.412} &
		8.477 & 98.213 & \textbf{89.736} \\
		\bottomrule
		\hline
	\end{tabular}
	\vspace{2mm}
\end{table}


\subsection{Impact of Rotation on Pruning Methods}
\label{app:subsec:rotation_pruning}

In Figure~\ref{fig:rotation_pruning} and Finding 3, we observe that applying rotation without quantization may lead to notable degradation on pruning performance.
To further analyze this, Table~\ref{app:tab:rotation_pruning} compares the performance of LLaMA 3 8B model pruned with and without QuaRot-based rotation, across two pruning methods with different granularities.
We have two observations from the result.
First, rotation-induced degradation scales with the pruning ratio.
This is because higher pruning ratios result in more units being pruned that are altered by rotation, thereby increasing the error.
Second, unstructured pruning (SparseGPT) exhibits significantly higher error compared to structured pruning (SLEB).
This trend is especially evident under high pruning ratios.

\begin{figure}[t]
	\centering
	\includegraphics[width=\linewidth]{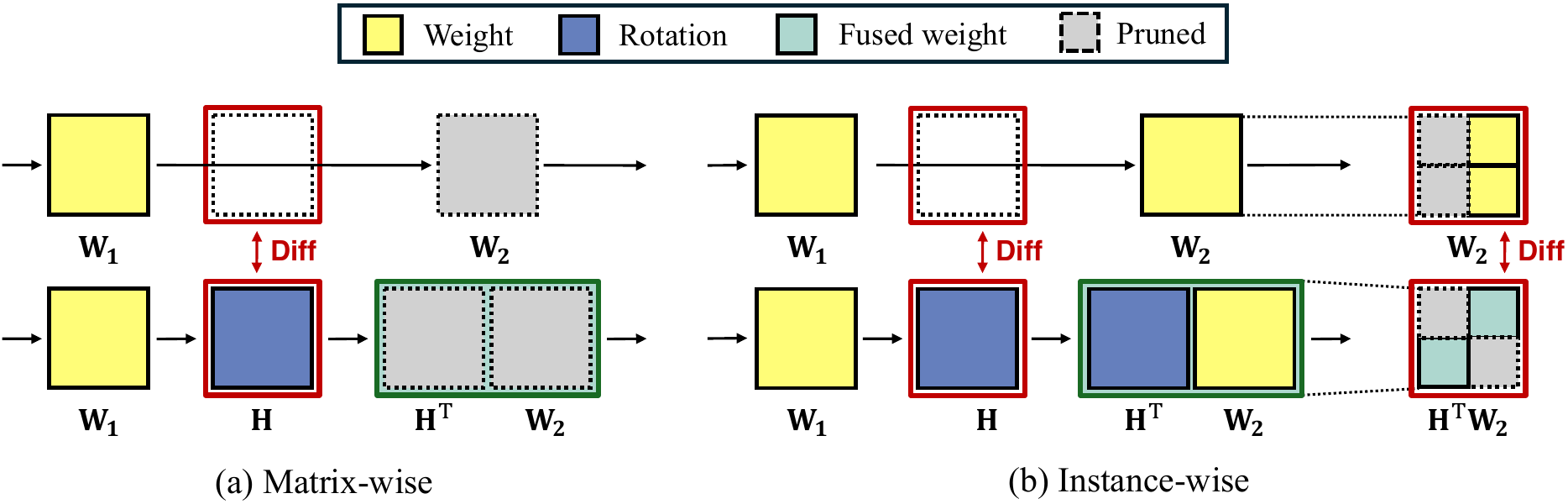}
	\caption{
		\label{app:fig:rotate_prune}
		Two cases of errors when pruning rotated units.
		See Section~\ref{app:subsec:rotation_pruning} for details.
	}
\end{figure}

We therefore investigate the underlying reason behind this phenomenon.
We identify two types of errors induced by pruning, depending on its granularity: matrix-wise and element-wise errors.
Figure~\ref{app:fig:rotate_prune} conceptually illustrates these two cases.

First, in the case of matrix-wise pruning, ignoring rotation during pruning leaves the rotation-induced matrix $\mat{H}$ intact, introducing extra computation and numerical errors compared to the non-rotated case.
As suggested in QuaRot~\citep{QuaRot}, the rotation inverse is fused into the target layer, while the original transform is merged into the preceding normalization layer, leaving un-removed components that generate error during na\"ive pruning.
This type of error scales proportionally with the pruning ratio, as each pruned matrix introduces one such error.

Second, in element-wise pruning, additional errors arise due to rotation-induced changes in unit selection, on top of the matrix-wise error.
As the goal of rotation is to facilitate quantization by flattening activation outliers, multiplying its inverse results in an error compared to the original matrix.
Consequently, the discrepancy in layer content leads to different pruning decisions.
Furthermore, this selection-based error grows with higher pruning ratios due to a greater number of pruned units.

In summary, given these errors, it is crucial to develop pruning techniques that align with rotation-based quantization strategies.

\subsection{A Direct Comparison with Prior Studies}
\label{app:subsec:direct_comparison_interplay}
We discuss how our approach differs from prior studies, particularly SmoothQuant~\citep{SmoothQuant} and~\citet{Interplay}.
Our contribution lies in establishing a general analysis for understanding compression order across diverse methods, whereas these prior works either focus on single-method optimization or analyze specific method pairs under restrictive assumptions.

\smallsection{SmoothQuant~\citep{SmoothQuant}}
SmoothQuant addresses a fundamentally different problem than joint compression order optimization.
Specifically, SmoothQuant optimizes a \textit{single} compression technique (quantization) by mitigating activation outliers through per-channel scaling transformations.
While the paper states that ``SmoothQuant is orthogonal to quantization schemes,'' this refers to its compatibility as a
pre-processing step that can be applied before various quantization methods.
However, SmoothQuant does not examine the order-dependent interaction problem when combining quantization with other compression families such as pruning.
In contrast, our work focuses on understanding how compression order affects the model performance when sequentially combining methods from different compression families.
SmoothQuant may serve as a component within our quantization baselines (i.e., as a pre-processing step before quantization), but our analysis operates at a higher level—determining the optimal ordering between different model compression techniques regardless of the specific quantization implementation.
This distinction is critical: SmoothQuant addresses \textit{intra-method} optimization (improving quantization itself), while we address \textit{inter-method} composition
(ordering across different compression types).

\smallsection{\citet{Interplay}}
As noted in Section~\ref{sec:prelim}, only a few studies have addressed how the order of compression methods affects the model performance.
Among them, \citet{Interplay} stands out as the only study that attempts a theoretical approach to the problem.
They examine the interaction between pruning and quantization, showing that the two are not orthogonal as assumed by previous works.
Furthermore, they argue that pruning followed by quantization is universally optimal.

However, their framework suffers from three significant limitations.
First, their framework relies on oversimplified assumptions that hinder practical applicability.
Specifically, they focus solely on magnitude-based pruning (removes weights based on their absolute values) and max-scaled block-wise quantization (uniformly rescaling blocks using their maximum value), both of which are na\"ive approaches that are less practical and often fail to preserve accuracy.
Second, their analysis is confined to a minimal set of scenarios, failing to address diverse architectures or methods.
Beyond the limited set of methods, their experiments also consider only the combination of two techniques—pruning and quantization—on decoder-based LLMs, lacking broader coverage of models and compression approaches.
Lastly, the framework cannot be generalized across different settings, as many counterexamples have shown that pruning-before-quantization is not always optimal.
Motivated by these gaps, we aim for a more general formulation that holds across methods, models, and metrics, thereby introducing the Progressive Intensity Hypothesis.

\subsection{Violation Cases of the Hypothesis}
\label{app:subsec:violation}
Although our hypothesis is highly general and robust, we still observe cases where it does not hold.
These cases largely fall into three categories: severe performance collapse, full model re-training, and increase of order-affected layers.
First, each model exhibits a different tolerance to compression, with performance dropping exponentially beyond a certain ratio.
While these settings are impractical due to severe performance loss, we observe cases where applying the stronger method first performs better.
This may be because the error is already too large, violating our assumption of well-designed compression in Section~\ref{sec:theory}; applying the stronger method first might help reduce the total error.
For less compression-robust models like decoder-based LLMs, we observe earlier breakdowns—such as diminishing advantage when pruning ratio increases at fixed bit-width (Figure~\ref{fig:llm_result:llama2_7B}).
Second, when strong full-training is applied, the advantage from compression order may invert.
Compression order serves merely as initialization, and the retraining process dominates, making it difficult to attribute outcomes to order alone.
We plan to investigate these and potentially other exceptions more rigorously in future work.
Lastly, in practical situations, increasing $\comprateprune$ may result in increase of order-affected layers, leading to a violation of the hypothesis.
We analyze the details in the following paragraph.

\begin{figure}[t]
	\centering
	\includegraphics[width=\linewidth]{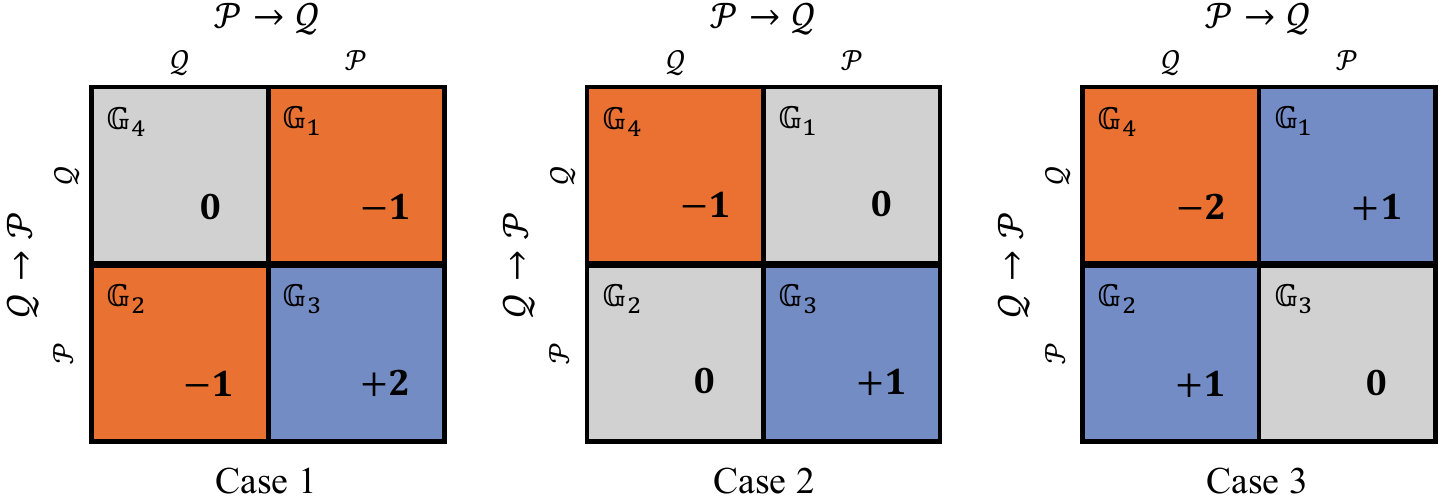}
	\caption{
		If the number of pruned units increases by one, the change in unit allocation across groups fall into three distinct cases.
		See Appendix~\ref{app:subsec:violation} for details.
	}
	\label{app:fig:violation}
\end{figure}

\smallsection{Impact of $\comprateprune$}
Without loss of generality, we consider the case when only $\comprateprune$ increases.
Increasing $\comprateprune$ implies a stronger pruning effect, since it lowers $\metric{\prune{\model}}$, resulting in a decrease in $\pg{\pruneonly, \quantonly}$ and a corresponding increase in $\compeqrateprune$.
Hence, to ensure monotonicity and satisfy Hypothesis~\ref{hypothesis}, $\coa{\quantonly \rightarrow \pruneonly}$ should increase accordingly.
Note that $\compratequant$ is fixed while analyzing the effect of $\comprateprune$.

To analyze the effect of increasing $\comprateprune$, we first consider a local step in which the total number of pruned units increases by one.
For the initial pruning ratio $p$ and the increased ratio $p'$ under the same granularity $\typegran{\pruneonly}$, the following relation holds:
\[
p' \cdot |\units{\model; \typegran{\pruneonly}}|
= p \cdot |\units{\model; \typegran{\pruneonly}}| + 1.
\]
From the definition of compression ratio $\comprateprune = 1/(1 - p)$, larger pruning ratios correspond to larger compression ratios.
By repeating this incremental process, we can construct any pruning ratio.
Under disjoint selectivity, each unit is exclusively assigned to one compression method, allowing us to partition the units into four disjoint groups as discussed in Appendix~\ref{app:subsec:proof_thm1}.

Increasing the pruning ratio from $p$ to $p'$ result in three possible changes in the group configuration: the number of affected layers by order (i.e., $|\groupone|$ or $|\grouptwo|$; note that two values are equal.) 1) decreases by one, 2) remains unchanged, or 3) increases by one.
The three cases are visualized in Figure~\ref{app:fig:violation}.
From Theorem~\ref{thm:1}, $\coa{\quantonly \rightarrow \pruneonly}
= \beta \cdot \big(\sum_{\layeri \in \grouptwo} {g(\layeri)} - \sum_{\layeri \in \groupone} {g(\layeri)} \big)$ (where $g(\layeri) = \|\errorq{\weighti, \acti} \|_F^2 - \| -\weighti \acti \|_F^2$) should be preserved or increased to satisfy Hypothesis~\ref{hypothesis}.
However, this condition is fulfilled in only Cases 1 and 2, but not in Case 3.

\begin{itemize}[leftmargin=3.4mm, itemsep=0mm]
	\item \textbf{Case 1: Number of layers affected by order decreases by one.}
	If a layer is no longer affected by compression due to order change, then another layer must also be excluded to preserve the total number of pruned layers which should be increased by one.
	Thus, $|\groupone|$ and $|\grouptwo|$ each decrease by one, while $|\groupthree|$ increases by two.
	Similar to Case 1 of Appendix~\ref{app:subsec:proof_thm2}, as the increase in $\comprateprune$ eliminates a negative loss term, $\coa{\quantonly \rightarrow \pruneonly}$ increases.
	\item \textbf{Case 2: Number of layers affected by order remains unchanged.}
	If the additionally pruned layer is always pruned regardless of the compression order, then $|\groupthree|$ increases by 1 while $|\groupfour|$ decreases by 1.
	Similar to Case 2 of Appendix~\ref{app:subsec:proof_thm2}, as the loss-contributing groups $\groupone$ and $\grouptwo$ do not change, the compression order advantage $\coa{\quantonly \rightarrow \pruneonly}$ remains unaffected.
	\item \textbf{Case 3: Number of layers affected by order increases by one.}
	The aforementioned two cases could not increase the number of order-dependent units, whereas there should exist a case where both groups $|\groupone|$ and $|\grouptwo|$ increases by one.
	As the number of total pruned layers should be increased by one, the added layers should be originated from $\groupfour$, i.e., $|\groupfour|$ decreases by two.
	In contrast to Case 1, $\coa{\quantonly \rightarrow \pruneonly}$ may decrease due to the increase of negative loss terms.
\end{itemize}

As the conditions under which each case emerges differ across specific configurations, this phenomenon is not analyzed in general settings.
We leave a precise characterization of the conditions under which increasing the pruning ratio may invalidate the progressive intensity hypothesis as an important direction for future work.

\subsection{Additional Remarks}
\label{app:subsec:remarks}

\smallsection{Limitations of Current Work}
We introduce a broadly applicable hypothesis that can be extended to diverse compression methods and model types across different domains.
Still, we acknowledge three important limitations in our current work.
First, due to the general nature of our framework, it does not provide detailed analysis for each specific combination of methods.
While our hypothesis captures high-level trends, it does not define the best compression sequence for individual cases.
This motivates research into discovering the best compression orderings under practical scenarios.
Second, our study is limited to joint model compression in plug-and-play settings where methods are combined post-hoc.
As demands for higher compression grow, integrated design strategies should be investigated beyond simple combinations.
Lastly, our framework does not yet provide explicit predictive rules or precise estimation of how much better one compression order is than another.
Capturing the nonlinear and cross-layer effects required for precise sign or value prediction of compression-order advantage $\mathcal{A}(\cdot)$ remains an open problem.
We therefore consider the development of predictive models for $\mathcal{A}$ and meta-learning approaches for automatic order selection as a promising direction for future research.

\smallsection{Future Work}
In addition to addressing the aforementioned limitations, future directions may also include extensions of our current framework.
First, a systematic study of interference across different pipeline designs would provide deeper insights beyond our current empirical findings.
Another direction is to automate compression order selection based on observed trends.
A unified approach that generalizes across cases may offer a better understanding on the role of compression order.
Lastly, evaluating our hypothesis on emerging architectures such as Mixture-of-Experts and multimodal LLMs may broaden its generality.

\smallsection{Usage of AI Assistants}
We employ ChatGPT\footnote{\url{https://chatgpt.com/}}
(GPT-4o) and Perplexity\footnote{\url{https://www.perplexity.ai/}}
exclusively for language polishing purposes; for improving grammar and clarity at the sentence level.
We do not use them for any research-related tasks, including code implementation, theoretical derivation, and result analysis.


\end{document}